\documentclass[11pt]{article}
\usepackage{rotating,amsmath,amssymb,amsfonts,amsthm}
\usepackage{epsfig,mathrsfs,natbib,color}
\usepackage{graphicx,graphics,multirow}
\usepackage{array,wrapfig}
\usepackage{hyperref}
\usepackage{booktabs,adjustbox,arydshln}
\usepackage{subcaption}
\usepackage[utf8]{inputenc}  
\usepackage[T1]{fontenc}     
\usepackage{url}             
\usepackage{nicefrac}   
\usepackage{microtype}       
\usepackage[ruled,vlined]{algorithm2e}
\def\boxit#1{\vbox{\hrule\hbox{\vrule\kern6pt \vbox{\kern6pt#1\kern5pt} \kern6pt\vrule}\hrule}}


\setlength{\oddsidemargin}{-0.15in}
\setlength{\topmargin}{-0.3in}  
\setlength{\textheight}{9.3in}  
\setlength{\textwidth}{6.9in}
\setlength{\parindent}{0.25in}
\setlength{\headsep}{0in}

\newcommand{\tpm}{$\pm$}

\newcommand{\bX}{{\boldsymbol X}}
\newcommand{\bY}{{\boldsymbol Y}}

\newcommand{\by}{{\boldsymbol y}}
\newcommand{\bx}{{\boldsymbol x}} 
\newcommand{\bz}{{\boldsymbol z}}
\newcommand{\bZ}{{\boldsymbol Z}}
\newcommand{\bb}{{\boldsymbol b}}
\newcommand{\be}{{\boldsymbol e}}
\newcommand{\bw}{{\boldsymbol w}}
\newcommand{\bv}{{\boldsymbol v}}

\newcommand{\ba}{{\boldsymbol a}}

\newcommand{\bk}{{\boldsymbol k}}
\newcommand{\bK}{{\boldsymbol K}}

\newcommand{\bI}{{\boldsymbol I}}

\newcommand{\bs}{{\boldsymbol s}}

\newcommand{\mR}{\mathbb{R}}

\newcommand{\BY}{\mathbb{Y}}

\newcommand{\mbE}{{\mathbb{E}}}

\newcommand{\bbeta}{{\boldsymbol \beta}}

\newcommand{\btheta}{{\boldsymbol \theta}}

\newcommand{\bvartheta}{{\boldsymbol \vartheta}}
\newcommand{\bepsilon}{{\boldsymbol \epsilon}}

\newcommand{\bmu}{{\boldsymbol \mu}}

\newcommand{\var}{{\mbox{Var}}}

\newcommand{\diag}{{\mbox{diag}}}
\newcommand{\bYmis}{{\boldsymbol Y}_{{\rm mis}}}

\newcommand{\bXmis}{{\boldsymbol Z}_{{\rm mis}}}
\newcommand{\bXobs}{{\boldsymbol Z}_{{\rm obs}}}

\newtheorem{theorem}{Theorem}[]
\newtheorem{lemma}{Lemma}[]

\newtheorem{assump}{Assumption}


\begin{document}

\title{A Kernel-Expanded Stochastic Neural Network}

 \author{Yan Sun and Faming Liang\thanks{To whom correspondence should be addressed: Faming Liang.
  F. Liang is Professor (email: fmliang@purdue.edu),  
  Y. Sun is Graduate Student, Department of Statistics,
  Purdue University, West Lafayette, IN 47907.
 } 
 }
 
 \date{} 
 
 \maketitle 
 
\begin{abstract}
The deep neural network suffers from many fundamental issues in machine learning. For example, it often gets trapped into a local minimum in training, and its prediction uncertainty is hard to be assessed. To  address these issues, we propose the so-called kernel-expanded stochastic neural network (K-StoNet) model, which incorporates support vector regression (SVR) as the first hidden layer and reformulates the neural network as a latent variable model. The former maps the input vector into an infinite dimensional feature space via a radial basis function (RBF) kernel, ensuring absence of local minima on its training loss surface. The latter breaks the high-dimensional nonconvex neural network training problem into a series of low-dimensional convex optimization problems, and enables its prediction uncertainty easily assessed. The K-StoNet can be easily trained using the imputation-regularized optimization (IRO) algorithm. Compared to traditional deep neural networks,  K-StoNet possesses a theoretical guarantee to asymptotically converge to the global optimum and enables the prediction uncertainty easily assessed. The performances of the new model in training, prediction and uncertainty quantification are illustrated by simulated and real data examples. 

\vspace{5mm}

\noindent {\bf Keywords}: Imputation-Regularized Optimization, Latent Variable Model, Global Optimum, Support Vector Regression, Uncertainty Quantification

\end{abstract}

\section{Introduction}

Deep learning has been the engine powering many of the recent successes of artificial intelligence (AI). However, the deep neural network (DNN), as 
the basic model of deep learning, still suffers from many fundamental issues from the perspective of statistical modeling. For example, it often gets trapped into a local energy minimum and its prediction uncertainty is hard to be quantified. In consequence, it is often unclear 
whether a DNN is guaranteed to 
have a desired property after training instead of getting trapped into an arbitrarily poor local minimum, and whether its decision/prediction is reliable. These issues make the trustworthiness of AI highly questionable.

Toward understanding the optimization process of the DNN training, a line of researches have been 
done from the perspective of over-parameterization. 
For example, \cite{GoriTesi1992} and \cite{Nguyen2017TheLS} studied the training loss surface of over-parameterized DNNs. They showed that for a fully connected DNN, almost all local minima are globally optimal, if the width of one
layer of the DNN is no smaller than the training sample size and the network structure from
this layer on is pyramidal. Recently, \cite{AllenZhu2019ACT,Du2019GradientDF, Zou2020GradientDO,ZouGu2019} explored the convergence theory of the gradient-based algorithms in training over-parameterized DNNs. They showed that the gradient-based algorithms with random initialization can converge to global minima provided that the 
width of the DNN is polynomial in training sample size. With some training tricks, such as early stopping and batch normalization \cite{batchnormalization2015}, the over-parameterized DNNs may work well in prediction \cite{Doubledescent2020}, but are no longer well calibrated as shown in \cite{CalibrationDNN2017}; that is,
 their prediction probabilities do not effectively reflect the true likelihood of the events \cite{KuhnJ2013}.
Motivated by this observation, 
 uncertainty quantification for deep learning has received much attention in the recent literature. Various methods of uncertainty quantification have been proposed, e.g., Bayesian sparse deep learning \cite{Wang2020UncertaintyQF}, Monte Carlo dropout
 \cite{MCdropout2016}, and deep ensemble \cite{Deepensemble2017}, which all work with an  ensemble of DNNs. However,
 the methods based on a single DNN  are rare.

This paper proposes a new neural network model, the so-called kernel-expanded stochastic neural network (or K-StoNet in short), which overcomes the issues on local trap and uncertainty quantification suffered by the DNN in a coherent way. The new model incorporates support vector regression (SVR) \citep{VapnikL1963,VapnikC1964}
as the first hidden layer and reformulates the neural network as a latent variable model. The former maps the input vector from its original space into an infinite dimensional feature space, ensuring all local minima on the loss surface are globally optimal. 
The latter resolves the parameter optimization 
and statistical inference issues associated with the neural network: it breaks the high-dimensional nonconvex neural network training problem into a series of low-dimensional convex optimization problems, and enables the prediction uncertainty easily assessed.
The new model can be easily trained using the imputation-regularized optimization (IRO) algorithm \cite{Liang2018missing}, which converges very fast, usually within a small number of epochs. Moreover, the introduction of the SVR layer with a universal kernel \citep{UnivKernel2006,Hammer2003} enables K-StoNet to work with a smaller network, while ensuring the universal approximation capability.  
{\it In summary, this work provides a new neural network model which possesses a theoretical guarantee to asymptotically converge to the global optimum and enables the prediction uncertainty easily assessed.}
As discussed in Section \ref{discussionsection}, we expect that this work will have many implications toward the development of trustworthy AI.

As shown later, reformulating the neural network as a latent variable model (or called a stochastic neural network) is crucial for training the kernel-expanded neural network and making valid inference for its prediction uncertainty. 
We note that 
stochastic neural networks have a long history in machine learning.  Famous 
examples include deep belief networks \citep{Hinton2007} and deep Boltzmann machines \citep{SHinton2009}, which have ever advanced the development of machine learning. Recently, some researchers have proposed to add noise to the DNN to improve its performance.
For example, \cite{srivastava2014dropout} proposed the dropout method to prevent the DNN from over-fitting by randomly dropping some hidden and visible units during training; 
\cite{Neelakantan2017AddingGN} proposed to add gradient noise to improve training; and \cite{You2018AdversarialNL, Noh2017RegularizingDN, Glehre2016NoisyAF} proposed to 
use stochastic activations through adding noise to improve 
generalization and adversarial robustness. 
However, these methods are usually not systematic and theoretical guarantees are hard to be provided. In contrast, K-StoNet is developed under a rigorous statistical framework, whose convergence to the global optimum is asymptotically guaranteed and whose prediction uncertainty can be easily assessed.

The remaining part of this paper is organized as follows. Section \ref{modelsection} describes the K-StoNet model and the IRO algorithm. Sections \ref{numericalI} and \ref{numericalII} illustrate the performance of K-StoNet using simulated and real data problems. Section \ref{uncertainsection} describes how to quantify prediction uncertainty for K-StoNet. Section \ref{discussionsection} concludes the  paper with a brief discussion.
 
\section{A Kernel-Expanded Stochastic Neural Network} 
\label{modelsection}

\subsection{A Kernel-Expanded Neural Network}

Let's start with a brief review for the theory developed in \cite{GoriTesi1992} and \cite{Nguyen2017TheLS}. 
Consider a neural network model with $h$ hidden layers. 
Let $\bZ_0=\bX \in \mR^{m_0}$ denote an input vector, let $\bZ_i \in \mR^{m_i}$ denote the output vector at layer $i$ for $i=1,2,\ldots,h,h+1$, and let $\bY \in \mR^{m_{h+1}}$ denote the target output. At each  layer $i$, the neural network calculates its output: 
\begin{equation} \label{DNNoperator}
\bZ_i=\Psi(\bw_i \bZ_{i-1}+\bb_i),  \quad i=1,2,\ldots,h, h+1, 
\end{equation}
where $\bw_i \in \mR^{m_i}\times \mR^{m_{i-1}}$ and $\bb_i \in \mR^{m_i}$ denote the weights and bias of the layer $i$ respectively,  $\Psi(\bs)=(\psi(s_1),\ldots,\psi(s_{m_i}))^T$, and 
$\psi(\cdot)$ is the activation function used in the network. 
For convenience, let $p=m_0$ denote the dimension of the input vector, let  $\tilde{\bw}_i=[\bw_i,\bb_i]$ denote the matrix of all parameters of layer $i$ for $i=1,2,\ldots,h+1$, and let $\btheta=(\tilde{\bw}_1,\tilde{\bw}_2,\ldots,\tilde{\bw}_{h+1})\in \Theta$.  
Further, we assume that the network structure is pyramidal with 
$m_0 \geq m_1 \geq \cdots \geq m_h \geq m_{h+1}$ and, for simplicity, the same activation function $\psi(\cdot)$ is used for all hidden units. 
Let $U: \Theta\to \mathbb{R}$ be the loss function of the neural network, which is given by
\[
 U(\btheta)=-\frac{1}{n} \sum_{i=1}^n \log \pi(\bY^{(i)}|\btheta,\bX^{(i)}) 
 \stackrel{\Delta}{=} \frac{1}{n} \sum_{i=1}^n  l(\bZ_{h+1}^{(i)}),
\]
 where $\pi(\cdot)$ denotes the density/mass function of each observation under the neural network model, $n$ denotes the training sample size, $i$ indexes the training sample, and ${\bZ}_{h+1}^{(i)}$ is the output vector of layer $h+1$, and $l: \mathbb{R}^{m_{h+1}}\to \mathbb{R}$ is assumed to be a continuously differentiable loss function, i.e., $l\in C^2(\mathbb{R}^{m_{h+1}})$. 
 In order to study the property of the loss function, \cite{Nguyen2017TheLS} made the following assumption: 

\begin{assump}  \label{ass1}
\begin{itemize}
    \item[(i)] All training samples are distinct, i.e., $\bX^{(i)} \ne \bX^{(j)}$ for all $i \ne j$;
    \item[(ii)] $\psi(\cdot)$ is real analytic, strictly monotonically increasing and (a) $\psi(\cdot)$ is bounded or (b) there are positive $\rho_1$, $\rho_2$, $\rho_3$ and $\rho_4$ such that $|\psi(t)|\leq \rho_1 e^{\rho_2 t}$ for $t<0$ and $|\psi(t)|\leq \rho_3 t+\rho_4$ for $t\geq 0$;
    \item[(iii)] $l\in C^2(\mathbb{R}^{m_{h+1}})$ and 
    if $l'(\ba)=0$ then $\ba$ is a global optimum. 
\end{itemize}
\end{assump}

Here a function $\psi: \mathbb{R} \to \mathbb{R}$ 
 is called real analytic if the corresponding Taylor series converges to $\psi(s)$ on an open subset of $\mathbb{R}$. It is easy to see that many of the activation functions, such as
 {\it tanh}, {\it sigmoid} and {\it softplus}, satisfy \ref{ass1}-(ii). It is known that the {\it softplus} function can be viewed as a differentiable approximation to ReLU. \ref{ass1}-(iii) can be satisfied by any twice continuously differentiable convex loss function, e.g., negative log-Gaussian and log-binomial density/mass functions. 
The following lemma is a restatement of Theorem 3.4 of \cite{Nguyen2017TheLS}. A similar result has also been established in \cite{GoriTesi1992}. 

\begin{lemma} (Theorem 3.4 of \cite{Nguyen2017TheLS}) \label{lem1}   Suppose Assumption \ref{ass1} holds. If (i) the training samples are linearly independent, i.e.,
 rank$([\bX,1_n])=n$; and (ii) the weight matrices $(\tilde{\bw}_l)_{l=2}^{h+1}$ have full row rank, i.e., rank$(\tilde{\bw}_l)=m_l$ for 
 $l=2,3,\ldots,h+1$, then every critical point of the loss function $U(\btheta)$ is a global minimum. 
 \end{lemma}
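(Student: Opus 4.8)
The plan is to combine the first-order condition $\nabla_\btheta U(\btheta)=0$ with the two rank hypotheses and the backpropagation identities, forcing a backpropagated ``error signal'' to vanish layer by layer and then reading off optimality from \ref{ass1}-(iii). First I would stack the forward pass over all $n$ samples: for each layer $l$ let $\bA_l=[\bZ_l^{(1)},\ldots,\bZ_l^{(n)}]\in\mR^{m_l\times n}$ collect the outputs, let $\bu_l\in\mR^{m_l\times n}$ collect the preactivations $\bu_l^{(k)}=\tilde\bw_l[\,(\bZ_{l-1}^{(k)})^T,1\,]^T$, and let $\tilde\bA_l$ denote $\bA_l$ augmented by a row of ones, so that $\bA_l=\Psi(\tilde\bw_l\tilde\bA_{l-1})$ entrywise. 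In this notation $\tilde\bA_0^T=[\bX,1_n]$, so hypothesis (i) says precisely that $\tilde\bA_0$ has full column rank $n$, hence trivial kernel.

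Next I would write out the critical-point equations. Introducing the per-sample loss gradient with respect to the preactivations, $\bD_l\in\mR^{m_l\times n}$ with $k$th column $\partial l(\bZ_{h+1}^{(k)})/\partial\bu_l^{(k)}$, the chain rule gives the compact gradient formula $\partial U/\partial\tilde\bw_l=\tfrac1n\,\bD_l\tilde\bA_{l-1}^T$ together with the backward recursion $\bD_l=\psi'(\bu_l)\odot(\bw_{l+1}^T\bD_{l+1})$ for $l\le h$, terminated by $\bD_{h+1}=\psi'(\bu_{h+1})\odot\bE_{h+1}$, where $\bE_{h+1}$ has columns $l'(\bZ_{h+1}^{(k)})$ and $\odot$ is the Hadamard product. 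At a critical point every $\partial U/\partial\tilde\bw_l$ vanishes, so $\bD_l\tilde\bA_{l-1}^T=0$ for all $l$.

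The heart of the argument is then a forward induction that annihilates $\bD_l$. Taking $l=1$ yields $\tilde\bA_0\bD_1^T=0$, and since $\tilde\bA_0$ has trivial kernel this forces $\bD_1=0$. Because $\psi$ is strictly increasing its derivative is positive, so from $\bD_1=\psi'(\bu_1)\odot(\bw_2^T\bD_2)=0$ I would cancel the entrywise positive factor $\psi'(\bu_1)$ to obtain $\bw_2^T\bD_2=0$; the full row rank of the weight block $\bw_2$ from hypothesis (ii), together with the pyramidal inequality $m_1\ge m_2$, makes $\bw_2^T$ injective, whence $\bD_2=0$. Repeating this ``positive derivative, then full-rank weight'' step up through the network gives $\bD_l=0$ for every $l$, and in particular $\bD_{h+1}=0$ forces $l'(\bZ_{h+1}^{(k)})=0$ for each $k$. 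Finally \ref{ass1}-(iii) upgrades each such stationarity to global optimality of $l$: every output $\bZ_{h+1}^{(k)}$ attains $\min_\ba l(\ba)$, so $U(\btheta)=\tfrac1n\sum_k l(\bZ_{h+1}^{(k)})=\min_\ba l(\ba)$, which lower-bounds $U$ at any parameter value; hence the critical point is a global minimum.

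The step I expect to be the main obstacle is the cancellation in the backward recursion, for two reasons. The first is making ``strictly increasing'' actually deliver ``$\psi'>0$ at the occurring preactivations'': a real-analytic strictly increasing $\psi$ has $\psi'\ge0$ with zeros only on an isolated set, and one must argue (as holds outright for \emph{tanh}, \emph{sigmoid} and \emph{softplus}) that $\psi'$ does not vanish at the preactivation values realized at the critical point, so that the Hadamard factor is genuinely invertible. The second is the bookkeeping between $\tilde\bw_l$ and its weight block $\bw_l$: the assumption is stated for the augmented matrices, yet only $\bw_l$ enters the backpropagation, so I must confirm that the injectivity actually used is that of $\bw_l^T$ along the pyramidal chain $m_{l-1}\ge m_l$. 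Everything else reduces to linear algebra, the only genuinely new inputs being the single rank-$n$ condition on $[\bX,1_n]$ that kills $\bD_1$ and the full-rank weights that transport this vanishing signal to the output layer.
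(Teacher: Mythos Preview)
The paper does not supply its own proof of Lemma~\ref{lem1}; it is quoted verbatim as Theorem~3.4 of Nguyen and Hein (2017) and used as a black box, so there is no in-paper argument to compare against. Your proposal is precisely the backpropagation skeleton underlying that cited theorem: use the rank-$n$ condition on $[\bX,1_n]$ to kill the first-layer error signal $\bD_1$ via $\bD_1\tilde\bA_0^T=0$, then push the vanishing forward through the network using strict positivity of $\psi'$ and injectivity of the transposed weight matrices, finally invoking \ref{ass1}-(iii) at the output.

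The two obstacles you isolate are exactly the places where the full proof does nontrivial work, and your diagnosis is accurate. On the first, a real-analytic strictly increasing $\psi$ need not have $\psi'>0$ everywhere (think $\psi(t)=t^3$), so the Hadamard cancellation is not automatic in full generality; the cited reference handles this, and for the activations actually named in \ref{ass1}-(ii) (\emph{tanh}, \emph{sigmoid}, \emph{softplus}) one has $\psi'>0$ outright. On the second, you are right that only $\bw_l$, not $\tilde\bw_l=[\bw_l,\bb_l]$, appears in the backward recursion, and full row rank of $\tilde\bw_l$ does \emph{not} imply full row rank of $\bw_l$; this is a slight imprecision in the paper's restatement rather than a flaw in your argument, since the original Nguyen--Hein hypothesis is on the weight blocks themselves. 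With that reading your induction goes through.
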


Among the conditions of Lemma \ref{lem1}, Assumption \ref{ass1} is regular as discussed above,  and condition (ii) can be almost surely satisfied by restricting the network structure to be pyramidal. However, condition (i) is not satisfied by many machine learning problems for which the training sample size is much larger than the dimension of the input. To have this condition satisfied, we propose a kernel-expanded neural network (or KNN in short), where each input vector
 $\bx$ is mapped into an infinite dimensional feature space by a radial basis function (RBF) kernel $\phi(\bx)$. 
 More precisely, the KNN can be expressed as 
\begin{equation} \label{KDNNeq}
\begin{split}
\tilde{\bY}_1 & =\bb_1+\bbeta \phi(\bX), \\
\tilde{\bY}_i & =\bb_i+\bw_i \Psi(\tilde{\bY}_{i-1}), \quad i=2,3,\ldots,h, \\
\bY & =\bb_{h+1}+\bw_{h+1} \Psi(\tilde{\bY}_h)+\be_{h+1},
\end{split}
\end{equation}
where $\be_{h+1} \sim N(0,\sigma_{h+1}^2I_{m_{h+1}})$ is Gaussian random error; $\tilde{\bY}_i, \bb_i \in \mathbb{R}^{m_i}$ for $i=1,2,\ldots,h$;  $\bY_{h+1},\bb_{h+1} \in \mathbb{R}^{m_{h+1}}$;
$\Psi(\tilde{\bY}_{i-1})=(\psi(\tilde{Y}_{i-1,1}), \psi(\tilde{Y}_{i-1,2}), \ldots,\psi(\tilde{Y}_{i-1,m_{i-1}}))^T$ for $i=2,3,\ldots,h+1$, 
and $\tilde{Y}_{i-1,j}$ is the $j$th element of $\tilde{\bY}_{i-1}$; $\bw_i \in \mathbb{R}^{m_i \times m_{i-1}}$ for $i=2,3,\ldots, h+1$; $\bbeta \in \mathbb{R}^{m_1 \times d_{\phi}}$ and $d_{\phi}$ denotes the dimension of the feature space  of the kernel $\phi(\cdot)$. For the RBF kernel, $d_{\phi}=\infty$.
Note that different kernels can be used for different hidden units of the first hidden layer. For notational simplicity, we consider only  
the case that  the same kernel is used for all the hidden units and $\bY$ follows a normal regression model.  Replacing the third equation of (\ref{KDNNeq}) by a logit model will lead to the classification case. 
In general, we consider only 
the distribution $\pi(\bY|\btheta,\bX)$ such that 
Assumption \ref{ass1}-(iii) is satisfied, where, with a slight abuse of notation, we use $\btheta=(\bb_1,\bbeta; \bb_2,\bw_2; \ldots,\bb_{h+1},\bw_{h+1})$ to denote the collection of all weights of the KNN. 

Compared to formula (\ref{DNNoperator}),  
formula (\ref{KDNNeq}) gives a new presentation 
form for neural networks, where the feeding operator (used for calculating $\bw_i \bZ_{i-1}+\bb_i$) and the activation operator $\psi(\cdot)$ are separated into two equations. As shown later, such a representation facilitates parameter estimation for the neural network when auxiliary noise are introduced into the model.

For KNN, since the input vector has been mapped into an infinite dimensional feature space,  the Gram matrix $\bK=(k_{ij})$, where $k_{ij}=\phi^T(\bx_i) \phi(\bx_j)$, is of full rank, i.e., rank$(\bK)=n$. 
This means the transformed samples $\phi(\bX^{(1)}), \phi(\bX^{(2)}), \ldots, \phi(\bX^{(n)})$ 
are linearly independent. In addition, we can restrict 
 the structure of the KNN to be pyramidal, and choose the activation and loss function such that Assumption \ref{ass1} is satisfied. Therefore, 
by Lemma \ref{lem1}, every critical point of the KNN model is a global minimum. In summary, we have the following theorem with the proof as argued above. 

\begin{theorem}\label{thm:1}
For a KNN model given in (\ref{KDNNeq}), if Assumption \ref{ass1} holds, an RBF kernel is used in the input layer, and the weight matrices $(\tilde{\bw}_l)_{l=2}^{h+1}$ are of 
full row rank, then every critical point of its loss function is a global minimum. 
\end{theorem}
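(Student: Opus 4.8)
The plan is to derive the theorem as a direct application of Lemma~\ref{lem1}, with the feature map $\phi(\bX)$ playing the role of the network input. First I would observe that, once the transformed sample $\phi(\bX)$ is regarded as the input vector and $\bbeta$ as the first-layer weight matrix, the KNN in (\ref{KDNNeq}) has the same layered structure assumed in Lemma~\ref{lem1}: splitting the feeding operator and the activation operator $\psi(\cdot)$ across two equations merely writes the pre-activations $\tilde{\bY}_i$ and post-activations $\Psi(\tilde{\bY}_i)$ separately, which is an equivalent reparametrization that does not alter the set of critical points of $U(\btheta)$. It therefore suffices to verify the two hypotheses of Lemma~\ref{lem1} for this feature-space representation.

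Next I would check condition (i) of Lemma~\ref{lem1}, namely that the augmented inputs be linearly independent. In the feature-space representation this amounts to requiring that $\phi(\bX^{(1)}),\ldots,\phi(\bX^{(n)})$ be linearly independent, equivalently that the Gram matrix $\bK=(k_{ij})$ with $k_{ij}=\phi^T(\bx_i)\phi(\bx_j)$ be nonsingular. The decisive step is thus to establish $\mathrm{rank}(\bK)=n$. For the RBF (Gaussian) kernel this follows from the classical strict positive-definiteness result for Gaussian kernel matrices: whenever the points $\bX^{(1)},\ldots,\bX^{(n)}$ are pairwise distinct, $\bK$ is strictly positive definite. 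Distinctness is guaranteed by Assumption~\ref{ass1}-(i), so $\bK$ is nonsingular, the feature vectors are linearly independent, and condition (i) holds; appending the bias $\bb_1$ (the analogue of the augmentation by $1_n$) can only preserve this independence.

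Condition (ii) of Lemma~\ref{lem1}, that the weight matrices $(\tilde{\bw}_l)_{l=2}^{h+1}$ have full row rank, is assumed directly in the theorem, and the remaining structural requirements---pyramidal architecture, real-analytic strictly monotone activation, and a $C^2$ loss whose only stationary points are global optima---are exactly Assumption~\ref{ass1} together with the pyramidal restriction. With both conditions of Lemma~\ref{lem1} verified, I would invoke the lemma to conclude that every critical point of the loss $U(\btheta)$ is a global minimum, which is precisely the assertion of the theorem.

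The main obstacle is the Gram-matrix rank claim in the second step; everything else is bookkeeping, since condition (ii) is assumed and the regularity conditions are inherited from Assumption~\ref{ass1}. The substantive content---the reason the kernel expansion circumvents the restrictive sample-size condition $n\le p+1$ implicit in $\mathrm{rank}([\bX,1_n])=n$---lies in showing that pushing distinct inputs through the RBF kernel yields $n$ linearly independent feature vectors irrespective of the ambient input dimension, for which the strict positive-definiteness of the Gaussian kernel on distinct points is the essential ingredient.
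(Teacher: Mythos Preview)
Your proposal is correct and follows essentially the same route as the paper: verify the linear-independence hypothesis of Lemma~\ref{lem1} by showing the RBF Gram matrix $\bK$ has full rank on distinct inputs, then invoke the lemma directly with the remaining hypotheses inherited from Assumption~\ref{ass1} and the full-row-rank assumption on $(\tilde{\bw}_l)_{l=2}^{h+1}$. If anything, your invocation of the strict positive-definiteness of the Gaussian kernel (together with Assumption~\ref{ass1}-(i)) makes the rank claim more explicit than the paper's brief appeal to the infinite-dimensional feature space.
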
 

Other than the RBF kernel, the polynomial kernel might
also satisfy Theorem \ref{thm:1} for certain problems. For an input vector $\bx\in \mathbb{R}^p$, the dimension of its feature space is $\binom{p+q}{q}$, where $q$ denotes the degree freedom of the polynomial kernel. Therefore, if the resulting Gram matrix is of full rank, then the transformed samples $\phi(\bX^{(1)}), \ldots, \phi(\bX^{(n)})$ are also linearly independent.
However, as stated in Assumption \ref{ass4}, the K-StoNet requires the kernel to be universal, so the polynomial kernel is not used in this paper.

\subsection{A Kernel-Expanded StoNet as an Approximator to KNN} \label{ksection}

As shown in Theorem \ref{thm:1}, the KNN has a  nice loss surface, where every critical point is a global minimum.  However, training the KNN using a gradient-based algorithm is infeasible, as the transformed features are not explicitly available. Based on the kernel representer theorem \citep{Wahba1990, Scholkopf2001}, one might consider to \textcolor{black}{replace} the first equation of (\ref{KDNNeq}) by 
\begin{equation}
\label{KNN_kernel_form}
\tilde{\bY}_1=\bb_{1}+\sum_{i=1}^n 
 \bw_{1}^{(i)} K(\bX^{(i)},\bX),
\end{equation}
where $\bw_1^{(i)} \in  \mathbb{R}^{m_1}$ and 
$K(\bX^{(i)},\bX)=\phi^T(\bX^{(i)}) \phi(\bX)$ is explicitly available, and then train such an over-parameterized neural network model using a regularization method. 
\textcolor{black}{However, the global optimality property established in Theorem \ref{thm:1} might not hold for the regularized KNN any more, because 
the proof of Theorem \ref{thm:1} relies on the back propagation formula of the neural network (see the proof of Theorem 3.4 in \cite{Nguyen2017TheLS} for the detail), while that formula cannot be easily generalized to regularized loss functions.}
\textcolor{black}{Moreover, for a nonlinear kernel regression $\bY=g(\tilde{\bY}_1)+\be=g(\bb_1 + \bbeta \phi(\bX))+\be$, where $g(\cdot)$ represents a nonlinear mapping from 
 $\tilde{\bY}_1$ to the output layer, the kernel representer theorem does not hold for $g(\cdot)$ in general and, therefore, (\ref{KNN_kernel_form}) and the first equation of (\ref{KDNNeq}) might not be equivalent for the KNN. Recall that SVR is a special case of the kernel regression with the identity mapping $g(\tilde{\bY}_1)=\tilde{\bY}_1$.}

\begin{figure}[htbp]
\centering
\includegraphics[width=0.75\linewidth]{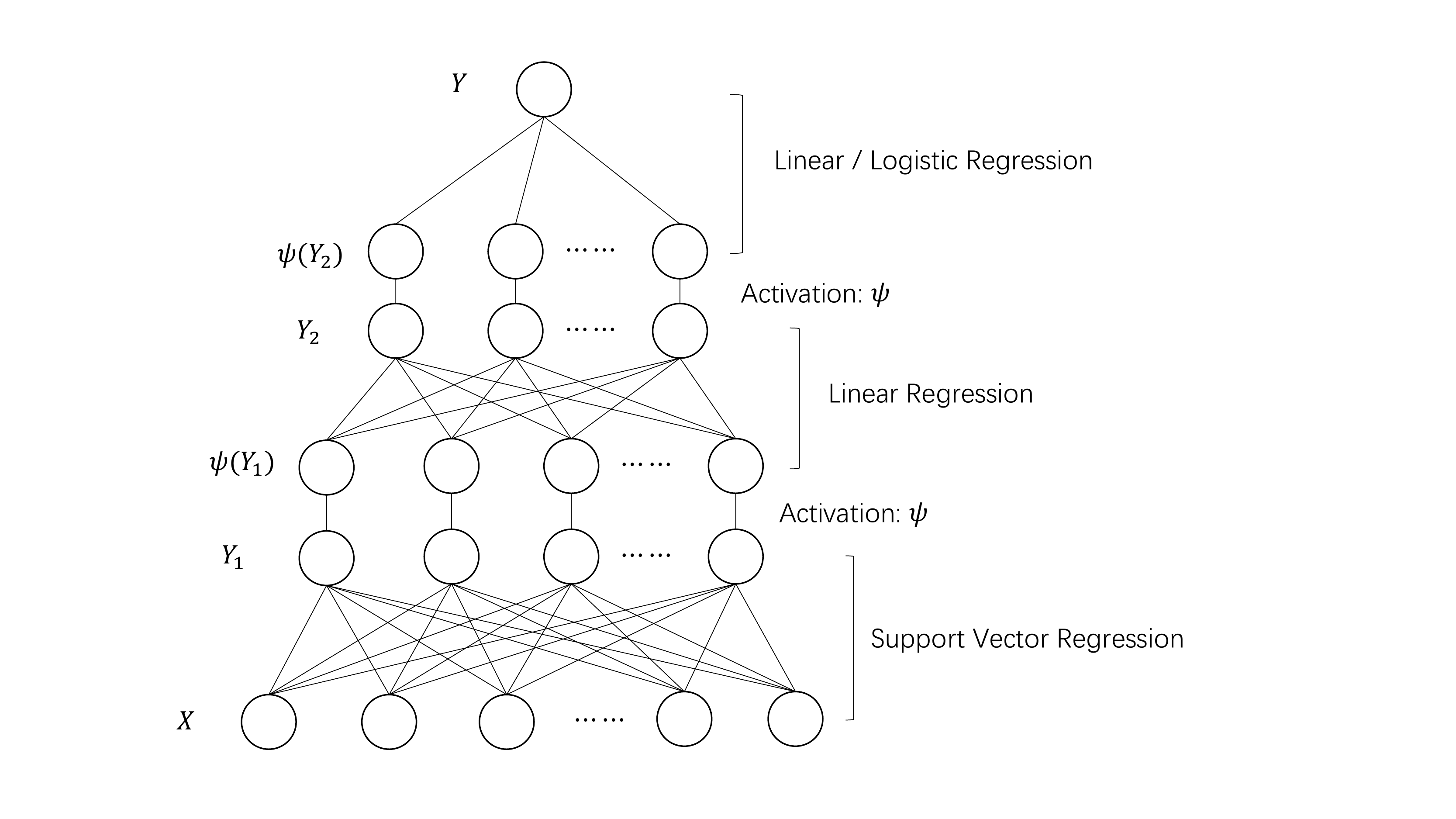}
\caption{An illustrative plot of K-StoNet}
\label{kstonetplot}
\end{figure}
 
To tackle this issue, we introduce
a K-StoNet model (depicted by Figure \ref{kstonetplot}) by adding 
auxiliary noise to $\tilde{\bY}_i$'s, $i=1,2,\ldots, h$, in (\ref{KDNNeq}). The resulting model is given by 
\begin{equation} \label{Kstoneteq}
\begin{split}
{\bY}_1 & =\bb_1+\bbeta \phi(\bX)+\be_1, \\
{\bY}_i & =\bb_i+\bw_i \Psi({\bY}_{i-1})+\be_i, \quad i=2,3,\ldots,h, \\
\bY & =\bb_{h+1}+\bw_{h+1} \Psi({\bY}_h)+\be_{h+1},
\end{split}
\end{equation}
where $Y_1,Y_2,\ldots,Y_h$ are latent variables. To complete the model specification, 
 we assume that $\be_i \sim N(0,\sigma_i^2 I_{m_i})$ for $i=2,3,\ldots,h,h+1$, 
 and each component of $\be_1$ is independent and identically distributed with the 
 density function given by 
 \begin{equation} \label{SVRdist}
 f(x) =\frac{C}{2(1+C \epsilon)}e^{-C |x|_{\varepsilon}},
 \end{equation}
 where $|x|_{\epsilon}=\max(0,|x|-\varepsilon)$ is an $\varepsilon$-intensive loss function, and $C$ is a scale parameter. It is known that 
 this distribution has mean 0 and variance 
 $\frac{2}{C^2}+\frac{\varepsilon^2(\varepsilon C+3)}{3(\varepsilon C+1)}$. 
 For classification networks, the last equation of (\ref{Kstoneteq}) is replaced by a generalized linear model (GLM), for which the parameter $\sigma_{h+1}^2$ plays the role of temperature for the binomial or multinomial distribution formed at the output layer. 
 \textcolor{black}{In summary, $\{C, \varepsilon, \sigma_2^2,\ldots,\sigma_h^2, \sigma_{h+1}^2\}$ work together to control the variation of the latent variables $\{\bY_1,\ldots,\bY_h\}$
 as discussed in Section \ref{hyperpsect}. }   
 As shown later, such specifications for the auxiliary noise enable the K-StoNet parameters to be estimated by solving a series of  convex optimization problems and the prediction uncertainty to be easily assessed via a recursive formula.

To establish that K-StoNet is a valid approximator to KNN, i.e., asymptotically they have the same loss function, some assumptions need to be imposed on the model. To indicate their dependence on the training sample size $n$, we redenote $C$ by $C_n$,  $\varepsilon$ by $\varepsilon_n$, and $\sigma_{i}$ by $\sigma_{n,i}$ for $i=2,3,\ldots,h+1$. For
  (\ref{SVRdist}), we assume $\varepsilon_n \leq 1/C_n$ holds as $n \to \infty$. As in KNN, we let 
$\btheta$ denote the parameter vector of K-StoNet, and let $d_{\theta}$ denote the dimension of $\btheta$. 
Since, for the KNN, any local minimum is also a global minimum, we can restrict $\Theta$ to a compact set which is large enough such that one local minimum is contained. This is essentially a technical condition. In practice, if a local convergence algorithm is used for training the KNN, it is then equivalent to set $\Theta=\mathbb{R}^{d_{\theta}}$, as the regions beyond a neighborhood of the starting point will never be visited by the algorithm.  

\begin{assump} \label{ass2}
(i) $\Theta$ is compact, which can be contained in a $d_{\theta}$-ball centered at the origin and of radius $r$;
(ii) $\mbE (\log \pi(\bY|\bX,\btheta))^2 <\infty$ for any  $\btheta \in \Theta$;
(iii) the activation function $\psi(\cdot)$ is $c'$-Lipschitz continuous for some constant $c'$;
(iv) the network's depth $h$ and widths $m_i$'s are all allowed to increase with $n$;
and (v) $\sigma_{n,h+1}=O(1)$, 
and $m_{h+1} (\prod_{i=k+1}^h m_i^2)m_{k} \sigma_{n,k}^2
\prec \frac{1}{h}$ for $k \in \{1,2,\ldots,h\}$,
where  $\sigma_{n,1}=1/C_n$.
\end{assump}

\textcolor{black}{Assumption A2-(ii) is the regularity condition for the distribution of $\bY$. Assumption A2-(iii) can be satisfied by many activation functions such as {\it tanh}, {\it sigmoid} and {\it softplus}. Assumption A2-(v) constrains the size of the noise added to each hidden layer such that the K-StoNet has asymptotically the same loss function as the KNN when the training sample size becomes large, where the factor $m_{h+1} (\prod_{i=k+1}^h m_i^2)m_{k}$ is derived in the proof of Theorem 2 and its square root can be interpreted as the amplification factor 
of the noise $\be_k$ at the output layer. }

As stated in Assumption \ref{ass4}, the SVR in K-StoNet is required to work with a universal kernel such as RBF. By  \citep{UnivKernel2006,Hammer2003,RBFUniv1991},
such a SVR possesses the universal approximation capability, so does K-StoNet. Therefore,
 K-StoNet is not necessarily very deep or wide, while having any continuous function approximated arbitrarily well as the training sample size $n \to \infty$. 
 For this reason, 
 we may restrict the depth $h=O(1)$, and restrict $m_1=o(\sqrt{n})$ and thus $m_i=o(\sqrt{n})$ for all $i=2,3,\ldots,h$ due to the pyramidal structure of K-StoNet. 
 The universal approximation property of SVR is quite different from that of the neural networks. The   former depends on the training sample size, while the latter depends on the network size. 
  The K-StoNet lies in the between of them.

Theorem \ref{thm:2} shows that the K-StoNet and KNN have asymptotically the same training loss function, whose proof is given in the Appendix.  

\begin{theorem} \label{thm:2} Suppose Assumption \ref{ass2} holds. Then the K-StoNet (\ref{Kstoneteq}) and the KNN (\ref{KDNNeq}) have asymptotically the same loss function, i.e.,  as $n\to \infty$,
\begin{equation} \label{equeq}
\small
 \sup_{\btheta \in \Theta} \big| \frac{1}{n} \sum_{i=1}^n \log\pi(\bY^{(i)},\bY^{(i)}_{\rm mis}  |\bX^{(i)},\btheta) -\frac{1}{n} \sum_{i=1}^n \log\pi(\bY^{(i)} |\bX^{(i)},\btheta) \big| \stackrel{p}{\to} 0, 
\end{equation}
where $\bY_{\rm mis}^{(i)}=(\bY_1,\bY_2,\ldots,\bY_h)$ denotes the collection of latent variables in (\ref{Kstoneteq}), and $\stackrel{p}{\to}$ denotes convergence in probability. 
\end{theorem}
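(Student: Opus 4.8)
The plan is to reduce the claimed discrepancy to a bound on how much the auxiliary noise $\be_1,\ldots,\be_h$ perturbs the output-layer contribution, and then to show that this perturbation is uniformly negligible under Assumption \ref{ass2}. Writing the complete-data log-likelihood in its layer-factorized form
\[
\log\pi(\bY,\bYmis|\bX,\btheta)=\log\pi(\bY_1|\bX,\btheta)+\sum_{i=2}^{h}\log\pi(\bY_i|\bY_{i-1},\btheta)+\log\pi(\bY|\bY_h,\btheta),
\]
one sees that the coupling between the latent path and the observed response $\bY$ enters only through the last factor. All of the $\bY$-dependence of the discrepancy therefore passes through the output residual $\bY-\bb_{h+1}-\bw_{h+1}\Psi(\bY_h)$ versus its noise-free counterpart $\bY-\bb_{h+1}-\bw_{h+1}\Psi(\tilde{\bY}_h)$, where $\tilde{\bY}_h$ is the deterministic KNN forward pass of (\ref{KDNNeq}). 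First I would express the per-sample difference of the two objectives in terms of $\bw_{h+1}\big(\Psi(\bY_h)-\Psi(\tilde{\bY}_h)\big)$ and control it through the $C^2$ structure of the output density.

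The core estimate is a noise-propagation recursion. Since $\bY_k-\tilde{\bY}_k=\bw_k\big(\Psi(\bY_{k-1})-\Psi(\tilde{\bY}_{k-1})\big)+\be_k$, the $c'$-Lipschitz continuity of $\psi$ (\ref{ass2}-(iii)) together with the operator-norm bounds on the weight matrices supplied by the compactness of $\Theta$ (\ref{ass2}-(i)) gives $\|\bY_k-\tilde{\bY}_k\|\le c'\|\bw_k\|\,\|\bY_{k-1}-\tilde{\bY}_{k-1}\|+\|\be_k\|$. Unrolling this inductively writes $\bY_h-\tilde{\bY}_h$ as a linear image of $\be_1,\ldots,\be_h$ up to higher-order Lipschitz remainders, so that the output perturbation driven by a single layer-$k$ noise has second moment of the order of $m_{h+1}\big(\prod_{i=k+1}^h m_i^2\big)m_k\,\sigma_{n,k}^2$ — precisely the amplification factor appearing in \ref{ass2}-(v) — where the first-layer SVR noise is handled separately using its variance $O(1/C_n^2)=O(\sigma_{n,1}^2)$ under $\varepsilon_n\le 1/C_n$. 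Summing the $h$ layer contributions and invoking \ref{ass2}-(v), each of which is $\prec 1/h$, bounds the total expected squared output perturbation by $o(1)$.

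To obtain the stated supremum I would combine the compactness of $\Theta$ with the uniform Lipschitz bounds above: every weight norm is uniformly bounded on $\Theta$, so the per-sample estimates hold uniformly in $\btheta$, and an $\varepsilon$-net argument together with equicontinuity of $\btheta\mapsto\log\pi(\cdot|\cdot,\btheta)$ upgrades the pointwise bound to $\sup_{\btheta\in\Theta}$. Finally, the regularity condition \ref{ass2}-(ii) lets me pass from the $L^2$ bounds on the individual summands to convergence in probability via Markov's inequality, averaging over the $n$ training samples to conclude (\ref{equeq}).

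The main obstacle will be closing the induction of the second step while the depth $h$ and widths $m_i$ are permitted to grow with $n$ (\ref{ass2}-(iv)): the Lipschitz remainders accumulated in propagating $\Psi$ across layers must be shown not to grow faster than the variance shrinkage in \ref{ass2}-(v) can absorb, and the heavy-tailed, non-smooth SVR noise at the first layer must be controlled through its own moment bounds rather than the Gaussian calculus available for layers $2,\ldots,h$. Making the propagation yield \emph{exactly} the factor $m_{h+1}\big(\prod_{i=k+1}^h m_i^2\big)m_k$, rather than a looser product of operator norms, is the delicate bookkeeping at the heart of the argument.
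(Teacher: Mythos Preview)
Your outline shares the paper's core ingredient --- the recursion $\|\bY_k-\tilde{\bY}_k\|\le c'\|\bw_k\|\,\|\bY_{k-1}-\tilde{\bY}_{k-1}\|+\|\be_k\|$ leading to the amplification factor $m_{h+1}\big(\prod_{i=k+1}^h m_i^2\big)m_k\,\sigma_{n,k}^2$ --- but the entry point is different. The paper does \emph{not} factorize and then compare only the output-layer density; instead it Taylor-expands the full complete-data log-likelihood in $\bYmis$ around the deterministic KNN forward pass $\tilde{\bY}=(\tilde{\bY}_1,\ldots,\tilde{\bY}_h)$, identifies the zeroth-order term $\log\pi(\bY,\tilde{\bY}\mid\bX,\btheta)$ with the KNN likelihood, and then computes $\nabla_{\bYmis}\log\pi(\bY,\tilde{\bY}\mid\bX,\btheta)$ explicitly to show that the gradient in $\bY_i$ vanishes for every $i<h$ (both adjacent residuals are zero at $\tilde{\bY}$) and that only the layer-$h$ component, driven by the output residual $\be_{h+1}$, survives. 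The noise-propagation recursion is then applied to that single surviving first-order term.

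The gap in your plan is the step ``express the per-sample difference of the two objectives in terms of $\bw_{h+1}\big(\Psi(\bY_h)-\Psi(\tilde{\bY}_h)\big)$.'' Your factorization leaves the hidden-layer contributions $\log\pi(\bY_1\mid\bX,\btheta)+\sum_{i=2}^{h}\log\pi(\bY_i\mid\bY_{i-1},\btheta)$ sitting in the discrepancy, and the observation that they carry no $\bY$-dependence does not make them small: for a Gaussian layer each equals $-\tfrac{1}{2\sigma_{n,i}^{2}}\|\be_i\|^{2}$ plus a normalizing constant, with expectation of order $m_i$, which is not $o(1)$ under Assumption~\ref{ass2}-(v). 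The paper's Taylor expansion handles these pieces structurally: at $\tilde{\bY}$ the quadratic parts vanish (leaving only $\btheta$-free normalizing constants in the zeroth-order term), their first-order contribution is exactly zero, and what remains is pushed into the $O(\|\bepsilon\|^{2})$ remainder that is bounded jointly with the layer-$h$ term. Your route would need an explicit side argument that these hidden-layer log-densities are $\btheta$- and data-independent functions of the auxiliary noise and can therefore be centered out before taking $\sup_{\btheta\in\Theta}$; without that step, controlling only the output-layer perturbation does not control the stated difference.
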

 
Let $Q^*(\btheta)=\mbE (\log \pi(\bY|\bX,\btheta))$, where the expectation is taken with respect to the joint distribution $\pi(\bX,\bY)$.
 By Assumption \ref{ass2}-(i) \& (ii), and the law of large numbers, 
 \begin{equation} \label{equeq2}
\frac{1}{n} \sum_{i=1}^n \log \pi(\bY^{(i)}|\bX^{(i)},\btheta) -Q^*(\btheta) \stackrel{p}{\to} 0,
\end{equation}
holds uniformly over $\Theta$.
Further, we make the following assumptions for 
$Q^*(\btheta)$:

\begin{assump} \label{ass3} (i)
 $Q^*(\btheta)$ is continuous in $\btheta$ and 
  uniquely maximized at $\btheta^*$; (ii)  
  for any $\epsilon>0$,  $\sup_{\btheta \in \Theta \setminus B(\epsilon)} 
  Q^*(\btheta)$ exists, where 
   $B(\epsilon)=\{\btheta: \|\btheta-\btheta^*\| < \epsilon\}$, and  $\delta=Q^*(\btheta^*)-
    \sup_{\btheta \in \Theta\setminus B(\epsilon)} Q^*(\btheta)>0$.
\end{assump}
\textcolor{black}{
Assumption A3 restricts the shape of $Q^*(\btheta)$ around the global maximizer, which cannot be discontinuous or too flat. Given nonidentifiability of the neural network model (see e.g. \cite{SunSLiang2021}), we here have implicitly assumed that each $\btheta$ in the KNN and K-StoNet is unique up to loss-invariant transformations, such as reordering some hidden units and simultaneously changing the signs of some weights and biases. } 

 \begin{lemma}\label{lem2}  Suppose 
 Assumptions \ref{ass2}-\ref{ass3} hold, and 
 $\pi(\bY,\bY_{\rm mis} |\bX,\btheta)$ is continuous 
 in $\btheta$. Let $\hat{\btheta}_n=\arg\max_{\btheta\in \Theta} 
 \big\{ \frac{1}{n} \sum_{i=1}^n \log\pi(\bY^{(i)},\bY^{(i)}_{\rm mis} \big |\bX^{(i)},\btheta) \big \}$.
 Then  $\|\hat{\btheta}_n- \btheta^*\| \stackrel{p}{\to} 0$ as $n\to \infty$.
\end{lemma}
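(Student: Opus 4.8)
The plan is to reduce Lemma \ref{lem2} to the classical argmax (M-estimator) consistency argument, once the appropriate uniform law of large numbers is in place. Write $Q_n(\btheta)=\frac1n\sum_{i=1}^n \log\pi(\bY^{(i)},\bY^{(i)}_{\rm mis}\,|\,\bX^{(i)},\btheta)$ for the complete-data objective that $\hat\btheta_n$ maximizes, and $\widetilde Q_n(\btheta)=\frac1n\sum_{i=1}^n \log\pi(\bY^{(i)}\,|\,\bX^{(i)},\btheta)$ for the observed-data objective whose limit is $Q^*$. The first step is to establish uniform convergence of $Q_n$ to $Q^*$ over $\Theta$. By the triangle inequality,
\[
\sup_{\btheta\in\Theta}|Q_n(\btheta)-Q^*(\btheta)| \le \sup_{\btheta\in\Theta}|Q_n(\btheta)-\widetilde Q_n(\btheta)| + \sup_{\btheta\in\Theta}|\widetilde Q_n(\btheta)-Q^*(\btheta)|,
\]
where the first term on the right vanishes in probability by Theorem \ref{thm:2} and the second by (\ref{equeq2}); hence $\sup_{\btheta\in\Theta}|Q_n(\btheta)-Q^*(\btheta)|\stackrel{p}{\to}0$. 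I would stress here that this is the step in which Theorem \ref{thm:2} does the essential work: $\hat\btheta_n$ is defined through the \emph{complete-data} likelihood, whereas $\btheta^*$ and $Q^*$ live on the \emph{observed-data} side, and Theorem \ref{thm:2} is exactly what bridges the two. I would also record that the assumed continuity of $\pi(\bY,\bY_{\rm mis}\,|\,\bX,\btheta)$ in $\btheta$ together with compactness of $\Theta$ (Assumption \ref{ass2}-(i)) guarantees that $Q_n$ is continuous and attains its maximum, so that $\hat\btheta_n$ is well defined.

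Next I would run the separation argument supplied by Assumption \ref{ass3}. Fix $\epsilon>0$ and let $\delta=Q^*(\btheta^*)-\sup_{\btheta\in\Theta\setminus B(\epsilon)}Q^*(\btheta)>0$ be the gap from Assumption \ref{ass3}-(ii). Define the good event $A_n=\{\sup_{\btheta\in\Theta}|Q_n(\btheta)-Q^*(\btheta)|<\delta/3\}$, which by the previous step satisfies $P(A_n)\to1$. On $A_n$, suppose for contradiction that $\hat\btheta_n\in\Theta\setminus B(\epsilon)$. On the one hand, optimality of $\hat\btheta_n$ and the pointwise bound at $\btheta^*$ give
\[
Q_n(\hat\btheta_n)\ \ge\ Q_n(\btheta^*)\ >\ Q^*(\btheta^*)-\delta/3.
\]
On the other hand, the pointwise bound at $\hat\btheta_n$ together with the definition of $\delta$ give
\[
Q_n(\hat\btheta_n)\ <\ Q^*(\hat\btheta_n)+\delta/3\ \le\ \sup_{\btheta\in\Theta\setminus B(\epsilon)}Q^*(\btheta)+\delta/3\ =\ Q^*(\btheta^*)-\tfrac{2}{3}\delta.
\]
These two inequalities are incompatible, so on $A_n$ we must have $\hat\btheta_n\in B(\epsilon)$, i.e. $\|\hat\btheta_n-\btheta^*\|<\epsilon$.

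Finally I would translate this event inclusion into a probability bound: $P(\|\hat\btheta_n-\btheta^*\|\ge\epsilon)\le P(A_n^c)\to0$, and since $\epsilon>0$ was arbitrary this is precisely $\|\hat\btheta_n-\btheta^*\|\stackrel{p}{\to}0$. As for difficulty, the genuinely hard analytic content has already been absorbed into Theorem \ref{thm:2} (the uniform closeness of the complete- and observed-data likelihoods); granting that, the remaining work is the standard well-separated-maximum consistency machinery, and the only points requiring care are the correct bookkeeping of the three error budgets of size $\delta/3$ and the verification that $\hat\btheta_n$ is well defined on the compact set $\Theta$.
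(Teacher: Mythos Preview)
Your proposal is correct and follows essentially the same route as the paper: the paper states that Lemma \ref{lem2} is a direct application of an auxiliary lemma (Lemma \ref{lem3}) which is precisely the classical well-separated-maximum consistency argument you execute, and condition (B3) of that auxiliary lemma is verified exactly via the triangle-inequality decomposition into Theorem \ref{thm:2} plus (\ref{equeq2}) that you spell out. The only cosmetic differences are that the paper uses a $\delta/2$ budget and a direct deduction rather than your $\delta/3$ budget and proof by contradiction, which are immaterial.
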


The proof of Lemma \ref{lem2} is given in the Appendix. It implies that the KNN can be trained
by training K-StoNet as the sample size $n$ becomes large.

 \subsection{The Imputation-Regularized Optimization Algorithm} \label{IROsect}
 
To train the K-StoNet, we propose to use 
the imputation-regularized optimization (IRO) algorithm \cite{Liang2018missing}. Consider a 
missing data problem, where $\bXobs$ denotes observed data,  $\bXmis$ denotes  missing data, and  $\bvartheta$ denotes the parameter.
The IRO algorithm aims to find a consistent estimate of $\bvartheta$ by maximizing 
$\mathbb{E} \log\pi(\bXobs,\bXmis|\bvartheta)$, where the expectation is taken with respect to the joint distribution of $(\bXobs,\bXmis)$. Conceptually, this is a little different from the expectation-maximization (EM) algorithm \citep{Dempster1977} and stochastic EM algorithm \citep{CeleuxDiebolt1995}, which aim to estimate $\bvartheta$ by maximizing the marginal likelihood function $\pi(\bXobs|\bvartheta)$. 
 Practically, the IRO algorithm works in similar way to stochastic EM by iterating between an imputation step and an optimization step, but for which a regularization term can be included in the loss function at each optimization step for ensuring the convergence of the estimate under the high-dimensional scenario.

For K-StoNet, the IRO algorithm is to estimate 
$\btheta$ by maximizing $\mathbb{E} \log \pi(\bY,\bYmis|\bX,\btheta)$, which is equivalent to maximizing 
$Q^*(\btheta)=\mathbb{E} \log\pi(\bY|\bX,\btheta)$ as implied by 
(\ref{equeq}) and (\ref{equeq2}). This coincides with the goal of KNN training if the stochastic gradient descent (SGD) algorithm 
is used. 
Let $\hat{\btheta}_n^{(t)}$ denote the estimate of $\btheta$ obtained by the IRO algorithm at iteration $t$. The IRO algorithm 
 starts with an initial guess $\hat{\btheta}_n^{(0)}$ and then iterates between
 the following two steps:
 
\begin{itemize}
\item {\bf I-step}: {\it For each sample $(\bX^{(i)},\bY^{(i)})$, draw $\bYmis^{(i)}$ from the predictive distribution 
  $g(\bYmis|\bY^{(i)}, \bX^{(i)}, \hat{\btheta}^{(t)})$.}  

\item {\bf RO-step}: {\it Based on the pseudo-complete data, find 
   an updated estimate $\hat{\btheta}_n^{(t+1)}$ by 
   minimizing the penalized loss function, i.e.,
   \begin{equation} \label{IRoeq1}
   \small
    \hat{\btheta}_n^{(t+1)} =\arg\min\left\{  -\frac{1}{n} \sum_{i=1}^n \log\pi(\bY^{(i)},\bY^{(i)}_{\rm mis} \big |\bX^{(i)},\btheta) 
    + P_{{\lambda}_n}(\btheta)\right\},
\end{equation}
   where the penalty function $P_{{\lambda}_n}(\btheta)$ is chosen such that 
   $\hat{\btheta}_n^{(t+1)}$ forms a consistent estimate  of {\small 
   $\btheta_*^{(t)}=\arg\max_{\btheta}  \mbE_{\btheta_n^{(t)}} \log \pi(\bY,\bYmis|\bX,\btheta)
   = \int \log \pi(\bYmis, \bY|\bX,\btheta)) g(\bYmis|\bY,\bX,\btheta_n^{(t)}) 
   \pi(\bY|\btheta^*,\bX) \pi(\bX)$ $d\bYmis d\bY d\bX$}, 
   $\btheta^*$ denotes the true parameter of the model, and $\pi(\bX)$  denotes the density function of $\bX$. }
 \end{itemize}

For the K-StoNet, the joint distribution 
$\pi(\bY,\bYmis|\bX,\btheta)$ can be factored as
\begin{equation} \label{jointeq}
\small
\pi(\bY,\bYmis|\bX,\btheta) = \pi(\bY_1|\bX,\tilde{\bw}_1) [\prod_{i=2}^{h} \pi(\bY_i|\bY_{i-1}, \tilde{\bw}_i)] \pi(\bY|\bY_{h},\tilde{\bw}_{h+1}). 
\end{equation}
 Therefore, the optimization in (\ref{IRoeq1}) can be executed separately for each of the hidden and output layers with an appropriately specified penalty function. That is,  {\it K-StoNet can be trained by solving a series of lower dimensional optimization problems}.

 For the first hidden layer, the RO-step is reduced to solving a SVR for each hidden unit. 
 As described in \cite{smola2004tutorial}, 
 the parameter $\bbeta$ in (\ref{Kstoneteq}) can be
 estimated by solving a regularized optimization problem:
 \begin{equation} \label{SVRregularization}
 \begin{split}
  \arg\min_{\bbeta, \bb_1} \frac{1}{2}||\bbeta||_2^2 + \frac{\tilde{C}_n}{n} \sum_{i=1}^n |\bY_1^{(i)} - \bbeta \phi(\bX^{(i)}) - \bb_1 |_{\varepsilon},
\end{split}
 \end{equation}
 where the first term represents a penalty function, and $\tilde{C}_n$ represents the regularization parameter. We can set $\tilde{C}_n=C_n$ given in (\ref{SVRdist}), but not necessarily. In general, their values should make Assumptions \ref{ass2}-(v) and \ref{ass4}-(i) hold. 
  The consistency of the SVR estimator $\hat{\bbeta}^T\phi(\bx)+\hat{\bb}_1$, which is the basic requirement by the IRO algorithm,
  has been established in \cite{SVRconsistency2007} by assuming that a universal kernel \cite{Steinwartkernel2002,UnivKernel2006} such as RBF is used in (\ref{SVRregularization}).
  Equivalently, this is to reparameterize the SVR layer by kernel-based regression. By the kernel representer theorem \citep{Wahba1990, Scholkopf2001}, the solution to the regularized optimization problem (\ref{SVRregularization}) leads to the representer of the first equation of (\ref{Kstoneteq}) as  
 \begin{equation} \label{May28eq}
\bY_1=\hat{\bb}_{1}+\sum_{i=1}^n 
 \hat{\bw}_{1}^{(i)} K(\bX^{(i)},\bX)+\be_1.
\end{equation}
 In what follows, we will use  

 \textcolor{black}{$\check{\bw}_1=(\hat{\bw}_1^{(1)}, \ldots, \hat{\bw}_1^{(n)},\hat{\bb}_1)$ }
 to denote the estimator for the parameters of the SVR layer. 
 
 For other hidden layers, the RO-step is reduced to \textcolor{black}{solving}
 a linear regression for each hidden unit using a regularization method. To ensure convexity of the resulting objective function, a Lasso penalty \cite{Tibshirani1996} can be used. Alternatively, some nonconvex amenable penalties with vanishing derivatives away from the origin, such as the SCAD \citep{FanL2001} and MCP \citep{Zhang2010}, can also be used. As shown in  \cite{loh2017support}, for such nonconvex amenable penalties, any stationary point in a compact region around the true regression coefficients can be used to consistently estimate the parameters and recover the support of the underlying true regression.

For the output layer, the RO-step is reduced to \textcolor{black}{solving} a
multinomial logistic  or multivariate linear regression, depending on the problem under consideration. The Lasso, SCAD and MCP penalties can again be used for them by the theory of \cite{loh2017support}. In practice, this step can also be simplified to solving a linear or logistic regression for each output unit by ignoring the correlation between different components of $\bY$.

In summary, we have the pseudo-code given in Algorithm \ref{IRO} for training K-StoNet, where
\textcolor{black}{${\check{\bw}}_i^{(t)}$} 
denotes the estimate of the parameters for the layer $i$ at iteration $t$,  
$(\bY_0^{(s)},\bY_{h+1}^{(s)})=(\bX^{(s)},\bY^{(s)})$ denotes a training sample,
  $(\bY_1^{(s,t)},\ldots,\bY_h^{(s,t)})$ denotes the latent variables imputed for
   training sample $s$ at iteration $t$.
 For convenience, we occasionally use the notation $\bY_0^{(s,t)}=\bY_0^{(s)}$ and
 $\bY_{h+1}^{(s,t)}=\bY_{h+1}^{(s)}$. 
 
\begin{algorithm}[htbp]
\caption{The IRO Algorithm for K-StoNet Training}
\label{IRO}
\SetAlgoLined
 {\bf Input}: the total iteration number $T$, the Monte Carlo step number $t_{HMC}$, and the learning rate sequences $\{\epsilon_{t,i}: t=1,2,\ldots,T; i=1,2,\ldots,h+1\}$.
 
 {\bf Initialization}: Randomly initialize the network parameters 
\textcolor{black}{ $\hat{\btheta}_n^{(0)}=({\check{\bw}}_1^{(0)},\ldots,{\check{\bw}}_{h+1}^{(0)})$.}

 \For{t=1,2,\dots,T}{
   {\bf STEP 1. Backward Imputation:} 
   For each observation $s$, impute the latent variables in the order from layer $h$ to layer 1. More explicitly, impute $\bY_i^{(s,t)}$ from the  distribution 
   $\pi(\bY_i^{(s,t)}|
   \bY_{i+1}^{(s,t)},\bY_{i-1}^{(s,t)},
   {\check{\bw}}_i^{(t-1)}, {\check{\bw}}_{i+1}^{(t-1)}) \propto$
   $\pi(\bY_i^{(s,t)}|\bY_{i-1}^{(s,t)},{\check{\bw}}_i^{(t-1)}) \pi(\bY_{i+1}^{(s,t)}|\bY_i^{(s,t)},{\check{\bw}}_{i+1}^{(t-1)})$  
   by running HMC in $t_{HMC}$ steps, where
   $\pi(\bY_1^{(s,t)}|\bX^{(s)},{\check{\bw}}_1^{(t-1)})$ 
   can be expressed based on 
   (\ref{May28eq}).
 
  {\bf (1.1) Initialization}: Initialize $\bv_i^{(s,0)} =\boldsymbol{0}$, and  initialize $\bY_i^{(s,t,0)}$ by KNN, i.e., calculating $\bY_i^{(s,t,0)}$ for $i=1,2,\ldots,h$ in (\ref{Kstoneteq}) by setting the random errors to zero.
  
  {\bf (1.2) Imputation}:
  \For{k = 1, 2, \dots, $t_{HMC}$}{
   \For{i = h,h-1,\dots, 1}{
 \textcolor{black}{ 
 \begin{equation}  \label{imputationeq} 
  \small
     \begin{split}
    \bv_i^{(s,k)} & = (1 - \alpha)\bv_i^{(s,k-1)}+ \epsilon_{t,i} \nabla_{\bY_i^{(s,t, k-1)}} \log  \pi(\bY_i^{(s,t, k-1)}| 
   \bY_{i-1}^{(s,t, k-1)}, {\check{\bw}}_i^{(t-1)}) \\
       &  + \epsilon_{t,i} \nabla_{\bY_i^{(s,t, k-1)}}
         \log \pi(\bY_{i+1}^{(s,t, k)}| 
         \bY_i^{(s,t,k-1)}, {\check{\bw}}_{i+1}^{(t-1)}) + \sqrt{2\alpha\epsilon_{t,i}} \bz^{(s,t,k)},\\
    \bY_i^{(s,t,k)} & = \bY_i^{(s,t,k-1)} + \bv_i^{(s,k)}, 
    \end{split}
    \end{equation}
    }
    where $\bz^{(s,t,k)}\sim N(0, \bI_{m_i})$, $\epsilon_{t,i}$ is the learning rate,  
    and $1-\alpha$ is the momentum decay factor  ($\alpha = 1$ corresponds to Langevin Monte Carlo).
  } }
 {\bf (1.3) Output}: Set $\bY_i^{(s,t)} = \bY_i^{(s,t,t_{HMC})}$ for $i=1,2,\dots, h$.
  
   {\bf STEP 2. Parameter Updating:} 
   Update the estimates
      $({\check{\bw}}_1^{(t-1)},{\check{\bw}}_2^{(t-1)},\ldots,{\check{\bw}}_{h+1}^{(t-1)})$ 
   by solving $h+1$ penalized multivariate
   regressions separately. \\
{\bf (2.1) SVR layer}: 
\begin{equation} \label{optsolver1}
{\check{\bw}}_1^{(t)}= \arg\min_{\bbeta,\bb_1} \left\{ \frac{\tilde{C}_{n,1}^{(t)}}{n} \sum_{s=1}^{n} \| |\bY_{1}^{(s,t)}  -\bbeta^T \phi(Y_{0}^{(s,t)})  -\bb_1|_{\varepsilon} \|_1+  \frac{1}{2}\|\bbeta \|_2^2 \right\},
\end{equation}

 where $\tilde{C}_{n,1}^{(t)}$ is the regularization parameter used at iteration $t$.
 
{\bf (2.2) Regression layers}: \For{i=2,3,\ldots,h+1}{
\begin{equation} \label{optsolver2}
{\check{\bw}}_i^{(t)}=\arg \min_{\bw_i,\bb_i} \left\{ \frac{1}{n} \sum_{s=1}^{n} \|\bY_{i}^{(s,t)}-\bw_i \psi_i(\bY_{i-1}^{(s,t)})  -\bb_i\|_2^2+ P_{\lambda_{n,i}^{(t)}}(\tilde{\bw}_i) \right\},
\end{equation}
 where $\lambda_{n,i}^{(t)}$ is the regularization parameter used for layer $i$ at iteration $t$.  
 }
{\bf (2.3) Output:} Denote the updated estimate by
$\hat{\btheta}^{(t)}=({\check{\bw}}_1^{(t)},
\ldots, {\check{\bw}}_{h+1}^{(t)})$.
  }
\end{algorithm}

For Algorithm \ref{IRO}, we have a few remarks: 

\begin{itemize}
    \item  The Hamiltonian Monte Carlo (HMC) algorithm \cite{HMC1987,HMC-Neal2011,Cheng2018underdamped} is employed in the backward imputation step. Other MCMC algorithms such as Langevin Monte Carlo \citep{Rossky1978BrownianDA} and  the Gibbs sampler \cite{GemanG1984} can also be employed there.
     
    \item In the parameter update step, a Lasso penalty \cite{Tibshirani1996} is used in (\ref{optsolver2}) to induce the sparsity of StoNet, while ensuring convexity of the minimization problems. Therefore,
    {\it K-StoNet is trained by solving a series of convex optimization problems.} Note that the minimization in  (\ref{optsolver1}) is known as a convex quadratic programming problem \cite{Vapnik2000, BalaSVR2013}. Although solving the convex optimization problems is more expensive than a single gradient update, the IRO algorithm converges very fast, usually within tens of iterations. 
   
   \item  The major computational cost of K-StoNet comes from the SVR step when the sample size is large.  The computational complexity for solving an SVR is $O(n^2p+n^3)$, and that for solving a linear/logistic regression is bounded by $O(n m_1^2+m_1^3)$, while $m_1 \prec n^{1/2}$ is usually recommended. A scalable SVR solver will accelerate the computation of K-StoNet substantially. This issue will be further discussed at the end of the paper.
   
    \item If $m_1 \prec \sqrt{n}$ holds, then the penalty in
   (\ref{optsolver2}) can be simply removed for computational simplicity, while ensuring asymptotic normality of the resulting regression coefficient estimates by \citep{Portnoy1988}.  
\end{itemize}
     
 Like the stochastic EM algorithm, the IRO algorithm generates two interleaved Markov chains:
 \[
  \hat{\btheta}_n^{(0)} \to (\bY_1^{(1)},\ldots,\bY_h^{(1)}) \to \hat{\btheta}_n^{(1)} \to 
 (\bY_1^{(2)},\ldots,\bY_h^{(2)}) \to \hat{\btheta}_n^{(2)} \to 
 \cdots,
 \]
 whose convergence theory has been studied in  \cite{Liang2018missing}.
  To ensure the convergence  of the Markov chains in K-StoNet training, we make following assumptions for the regularization parameters used in (\ref{optsolver1}) and (\ref{optsolver2}):
  
  \begin{assump} \label{ass4}  
  (i) A universal kernel such as RBF is used in the SVR layer, and for each $t\in \{1,2,\ldots,T\}$, $1 \prec \tilde{C}_{n,1}^{(t)} \prec \sqrt{n}$ holds;
  and (ii)  
  for each $t\in \{1,2,\ldots,T\}$ and each $i\in \{2,3,\ldots,h+1\}$, 
  $\sup_{\tilde{\bw}_i \in \Theta_i} P_{\lambda_{n,i}}^{(t)}(\tilde{\bw}_i) \to 0$ holds as $n\to \infty$, where $\Theta_i$ denotes the sample space of $\tilde{\bw}_i$.
  \end{assump}
  
  Assumption \ref{ass4}-(i) ensures consistency of the regression function estimator in the SVR step by Theorem 12 of \cite{SVRconsistency2007}.
  Assumptions \ref{ass4}-(ii) ensures consistency of the weight estimators in the output and other hidden layers. For the Lasso penalty, we can set $P_{\lambda_{n,i}^{(t)}}(\tilde{\bw}_i)=\lambda_{n,i}^{(t)} \|\tilde{\bw}_i\|_1$ and $\lambda_{n,i}^{(t)}=O(\sqrt{\log (m_{i-1})/n})$ for any $t\in\{1,2,\ldots,T\}$ and $i\in\{2,3,\ldots,h+1\}$. Since $\Theta$ is bounded as assumed in \ref{ass2}-(i), \ref{ass4}-(ii) is satisfied. 
    In summary, we have the following theorem which is essentially a restatement of  Theorem 4 and Corollary 3 of \cite{Liang2018missing} and therefore whose proof is omitted.
    
     \begin{theorem} \label{thm:3} (Consistency)
    Suppose that Assumptions \ref{ass1}-\ref{ass4} hold and, further, the general regularity conditions on missing data (given in \cite{Liang2018missing}) hold. 
    Then for sufficiently large $n$, sufficiently large $T$, and almost every $(\bX,\bY)$-sequence, $\|\hat{\btheta}_n^{(T)} -\btheta^*\| \stackrel{p}{\to} 0$ and
    $\|\frac{1}{T}\sum_{t=1}^T \hat{\btheta}_n^{(t)}-\btheta^*\| \stackrel{p}{\to} 0$. In addition, for any 
    Lipschitz continuous function  $\zeta(\cdot)$ on $\Theta$, $\|\frac{1}{T}\sum_{t=1}^T\zeta(\hat{\btheta}_n^{(t)})-\zeta(\btheta^*)\| \stackrel{p}{\to} 0$.
    \end{theorem}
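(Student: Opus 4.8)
The plan is to recognize this statement as a verification exercise rather than a fresh proof: the convergence machinery for the interleaved Markov chains generated by the IRO algorithm has already been developed in \cite{Liang2018missing}, so the genuine work is to check that the K-StoNet model meets the hypotheses of Theorem 4 and Corollary 3 there. That theory requires three ingredients: (a) each empirical RO-step is a consistent estimator of the population RO-target $\btheta_*^{(t)}$, uniformly over the iterations $t\le T$; (b) a stability/contraction property of the population IRO map $\btheta \mapsto \btheta_*$ near its fixed point, together with the identification of $\btheta^*$ as that fixed point; and (c) the general missing-data regularity conditions, which are imported wholesale by hypothesis. I would organize the write-up around verifying (a) and (b), since (c) is assumed.

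First I would establish (a) using the factorization (\ref{jointeq}). Because the complete-data likelihood splits into independent contributions from the SVR layer, the $h-1$ intermediate regression layers, and the output layer, the RO-step (\ref{IRoeq1}) decouples into $h+1$ separate penalized regressions that can be analyzed one at a time. For the SVR layer, Assumption \ref{ass4}-(i) together with Theorem 12 of \cite{SVRconsistency2007} delivers consistency of the regression-function estimator $\hat{\bbeta}^T\phi(\bX)+\hat{\bb}_1$ under the universal RBF kernel. For the remaining layers, Assumption \ref{ass4}-(ii) with a Lasso (or amenable SCAD/MCP) penalty and the support-recovery results of \cite{loh2017support}, supplemented by \cite{Portnoy1988} in the regime $m_1 \prec \sqrt{n}$, yield consistent weight estimators. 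Since the penalty bounds in \ref{ass4} are required to hold for every $t\le T$, this consistency is uniform over iterations, which is exactly what the abstract theory needs.

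Next I would pin down (b), the fixed-point identification and stability. Assumption \ref{ass3}-(i) makes $Q^*(\btheta)$ uniquely maximized at $\btheta^*$, and Theorem \ref{thm:2} combined with (\ref{equeq2}) shows that the complete-data objective and the marginal objective coincide asymptotically and share $\btheta^*$ as their maximizer. Lemma \ref{lem2} then gives $\hat{\btheta}_n \stackrel{p}{\to} \btheta^*$, so $\btheta^*$ is the self-consistent fixed point of the population IRO map. Continuity of $\pi(\bY,\bYmis|\bX,\btheta)$ in $\btheta$, the uniqueness in \ref{ass3}, and compactness of $\Theta$ from Assumption \ref{ass2}-(i) control the fluctuations of the imputation law $g(\bYmis|\bY,\bX,\btheta)$ as $\btheta$ varies, which supplies the regularity underlying the contraction estimate.

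With (a)--(c) in hand, Theorem 4 of \cite{Liang2018missing} yields $\|\hat{\btheta}_n^{(T)}-\btheta^*\| \stackrel{p}{\to} 0$ for sufficiently large $n$ and $T$, and Corollary 3 upgrades this to the Cesàro average $\frac{1}{T}\sum_{t=1}^T \hat{\btheta}_n^{(t)}$, the averaging over $t$ also absorbing the Monte Carlo randomness conditional on the data and thereby producing the stated ``almost every $(\bX,\bY)$-sequence'' conclusion; the final Lipschitz-functional claim then follows by combining the average convergence with the Lipschitz bound on $\zeta$ and a uniform law of large numbers over the compact $\Theta$. I expect the real obstacle to be verifying the stability condition (b) rather than the consistency (a): one must control how the imputation distribution shifts with $\btheta$ and ensure the map does not drift away from $\btheta^*$ as the network grows. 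This is precisely where Assumption \ref{ass2}-(v), bounding the noise amplification $m_{h+1}(\prod_{i=k+1}^h m_i^2)m_k \sigma_{n,k}^2$, and the asymptotic loss equivalence of Theorem \ref{thm:2} do the heavy lifting, so the crux of the ``restatement'' is checking that these model-specific conditions translate into the abstract contraction hypothesis of \cite{Liang2018missing}.
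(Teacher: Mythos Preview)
Your proposal is correct and aligned with the paper's approach: the paper itself states that Theorem~\ref{thm:3} is ``essentially a restatement of Theorem 4 and Corollary 3 of \cite{Liang2018missing} and therefore whose proof is omitted,'' so your plan to verify the hypotheses of those results via Assumption~\ref{ass4} (layerwise consistency of the RO-step) and Theorem~\ref{thm:2}/Lemma~\ref{lem2}/Assumption~\ref{ass3} (fixed-point identification) is exactly the intended route. If anything, you are supplying more detail than the paper does, since it omits the verification entirely.
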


\textcolor{black}{As implied by Theorems \ref{thm:1}-\ref{thm:3} and Lemma \ref{lem2}, 
$\hat{\btheta}_n^{(t)}$ asymptotically converges to a  global optimum of the KNN.} 
For each $\hat{\btheta}_n^{(t)}$,  when making predictions, one can simply calculate the output in (\ref{Kstoneteq}) by ignoring the auxiliary noise, i.e., treating $\hat{\btheta}_n^{(t)}$ as the weights of a KNN. 
In this way, K-StoNet can be viewed as a tool for training the KNN, although it means more than that.

\subsection{Hyperparameter Setting} \label{hyperpsect}

As mentioned previously, DNN is often over parameterized to avoid getting trapped into a poor local minimum. In contrast, as implied by Theorems \ref{thm:1}-\ref{thm:3}, the local minimum trap is not an issue to K-StoNet any more. This, together with the universal approximation property of K-StoNet and the parsimony principle of statistical modeling, suggests that a small K-StoNet might work well for complex problems.  
As shown in Section \ref{numericalII}, the K-StoNet with a single hidden layer and a small number of hidden units works well for many complex datasets. 

Other than the network structure, the performance of K-StoNet also depends on the network hyperparameters as well as the hyperparameters introduced by the IRO algorithm. The former include $C_n$, $\varepsilon_n$ and $\sigma_{n,k}$'s for $k=2,\ldots,h+1$. The latter include  the learning rates and iteration number used in HMC backward imputation and the regularization parameters used in solving the optimizations (\ref{optsolver1}) and (\ref{optsolver2}). 
The hyperparameters $C_n$, $\varepsilon_n$ and $\sigma_{n,k}$'s control the variation of the latent variables and thus the 
variation of $\btheta_n^{(T)}$ by the theory developed in \cite{Liang2018missing} and \cite{Nielsen2000}. 
\textcolor{black}{In general, setting the latent variables to have slightly large variations can facilitate the convergence of the training process. On the other hand, as required by Assumption \ref{ass2}-(v), we need to control the variations of the latent variables sufficiently small for ensuring the convergence of K-StoNet to a global minimum of the corresponding KNN by noting the stochastic optimization nature of the IRO algorithm. } Assumption \ref{ass2}-(v) provides a clue for setting the network hyperparameters.  
\textcolor{black}{Here we would like to note that when $1/C_n$ and $\sigma_{n,i}^2$'s are set to be very small, to ensure the stability of the algorithm, we typically need to adjust the learning rate $\epsilon_{t,i}$'s to be very small as well such that their effects on the drift term of (\ref{imputationeq}) can be canceled or partially canceled. Meanwhile, to compensate the negative effect of the reduced learning rate on the mobility of the Markov chain, we need to lengthen the MCMC iterations, i.e., increasing the value of $t_{HMC}$,  appropriately.}
Finally, we note that setting $\sigma_{n,i}$'s in the monotonic pattern $\sigma_{n,h+1} \geq \sigma_{n,h} \geq \cdots \geq \sigma_{n,2} \geq 1/C_n$ is generally unnecessary, as long as their values have been in a reasonable range.  

In our experience, the performance of K-StoNet is not very sensitive to these hyperparameters as long as they are set in an appropriate range.  As shown in Appendix \ref{settingsection}, which collects all parameter settings of K-StoNet used in this paper, many examples share the same parameter setting.

\section{Illustrative Examples} \label{numericalI}

This section contains two examples. The first example demonstrates that K-StoNet indeed avoids local traps in training, and the second example demonstrates the performance of K-StoNet in the large-$n$-small-$p$ scenario that DNN typically works in. 
\textcolor{black}{K-StoNet is compared with DNN and KNN. For the KNN, the kernel representer given by  equation (\ref{KNN_kernel_form}) is used as the first hidden layer, and the kernel is set to be the same as that used by K-StoNet.}\footnote{Code for running experiments can be found in https://github.com/sylydya/A-Kernel-Expanded-Stochastic-Neural-Network}

\subsection{A full row rank example} \label{SimulationI} 

The dataset was generated from a two-hidden layer neural network with structure 1000-5-5-1.  
The input variables $x_{1},\ldots,x_{1000}$ were 
generated by independently simulating the variables $e, z_{1},\dots, z_{1000}$ from the standard Gaussian distribution and then setting $x_{i}=\frac{e+z_{i}}{\sqrt{2}}$. In this way, all the input variables are mutually correlated with a correlation coefficient of 0.5. The response variable  was generated by setting
\begin{equation} \label{roweq}
\by = \bw_3 \tanh (\bw_2 \tanh(\bw_1 \bx ) )  + \bepsilon,
\end{equation}
where $\bw_1 \in \mR^{5\times1000}$,  $\bw_2 \in \mR^{5\times 5}$  and $\bw_3 \in \mR^{1\times 5}$ represent the weights at 
different layers of the neural network, $\tanh(\cdot)$ is the hyperbolic tangent function, and the random error $\bepsilon \sim N(0,1)$.  Each elements of $\bw_i$'s was randomly sampled from the set  $\{-2,-1,1,2\}$. 
The full dataset consisted of $1000$ training samples and 1000 test samples.

We first refit the model (\ref{roweq}) using SGD. Since the training samples form a full row rank matrix of size $n=1000$ by $p=1001$ (including the bias term), SGD will not get trapped into a local minimum by Lemma \ref{lem1}.  SGD was run for 2000 epochs with a mini-batch size of 100 and a constant learning rate of 0.005. 
\textcolor{black}{Figure \ref{Simulation_high_dim_update} (upper panel) indicates that SGD indeed converges to a global optimum.}
 
For K-StoNet, we tried a model with one hidden layer and 5 hidden units. The model was trained by IRO  for 40 epochs.  
Since all training samples were used at each iteration, an iteration is equivalent to an epoch for K-StoNet. \textcolor{black}{
We also tried a KNN model for this example, which has the same structure as K-StoNet. 
The KNN was trained using SGD with a constant learning rate of 0.005 for 2000 epochs.}
Figure \ref{Simulation_high_dim_update} (upper panel) \textcolor{black}{compares the training and testing MSE paths of the three models. It
shows that K-StoNet converges to the global optimum in a few epochs; 
while DNN needs over 100 epochs, and KNN needs even more.} 
More importantly, K-StoNet is less bothered by over-fitting, whose prediction performance is stable after convergence has been reached. However, the DNN tends to be over fitted, whose prediction becomes worse and worse as training goes on. \textcolor{black}{The KNN is more stable than DNN in prediction, but worse than K-StoNet. As discussed in Section 2.2, the KNN with the kernel representer for the first hidden layer is not equivalent to K-StoNet in general. This experiment further demonstrates the importance of the stochastic structure introduced in  K-StoNet.}

\begin{figure}[htbp]

\centering
\begin{tabular}{c}
\includegraphics[height=2.75in,width=1.0\linewidth]{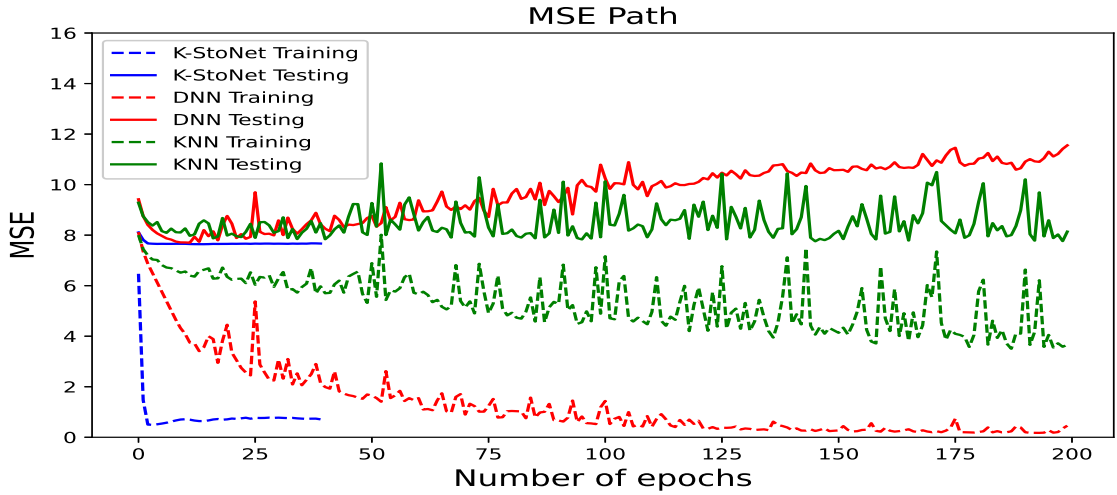} \\
\includegraphics[height=2.75in,width=1.0\linewidth]{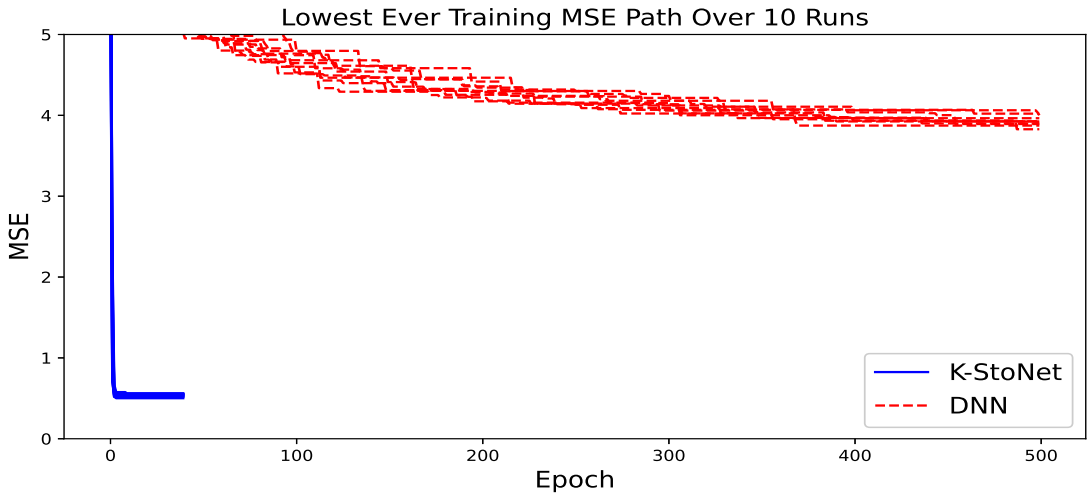}
\end{tabular}
\caption{{\it Upper Panel}: paths of the mean squared error (MSE) produced by K-StoNet and an unregularized DNN for one simulated dataset; and {\it Lower Panel}: best MSE (by the current epoch) produced by SGD for a regularized DNN and K-StoNet over 10 runs. }
\label{Simulation_high_dim_update}
\end{figure}

To explore the loss surface of the regularized DNN, we re-trained the true DNN model (\ref{roweq}) using SGD with a Lasso penalty ($\lambda=0.1$) imposed on all the weights. 
SGD was run for 500 epochs with a mini-batch size of 100 and a constant learning rate of 0.005. The run was repeated for 10 times. For comparison, K-StoNet was also retained for 10 times. Their convergence paths were shown in the lower panel of Figure \ref{Simulation_high_dim_update}. The comparison shows that the regularized DNN might suffer from local traps (different runs converged to different MSE values), 
while K-StoNet does not although its RO step also involves penalty terms.  \textcolor{black}{ According to the theory developed in \cite{Liang2018missing},  
the convergence of the IRO algorithm requires a consistent estimate of $\btheta_*^{(t)}$ to be obtained at each RO step and an appropriate penalty term is allowed for obtaining the consistent estimate. For K-StoNet, to ensure a consistent 
estimate to be obtained at each parameter updating step, we impose a $L_2$-penalty on the SVR layer and a Lasso penalty on the regression layers. For both of them, the resulting loss functions are convex, and the corresponding 
consistent (optimal) estimates are uniquely determined. 
Then, by Theorems \ref{thm:1}--\ref{thm:3} and Lemma \ref{lem2}, the convergence of K-StoNet to the global optimum is asymptotically guaranteed.}

In the previous example, the data was generated from a DNN model. Even working with the true structure, DNN is still inferior to K-StoNet in training and prediction. To further demonstrate the advantage of K-StoNet, we generated a dataset from a KNN model. The dataset consisted of 5000 training samples, where the input variables $\bx \in \mR^{5}$ with each component being a standard Gaussian random variable and a mutual correlation coefficient of 0.5.
Let $\bk = (K(\bx^{(1)}, \bx), \dots, K(\bx^{(5000)}, \bx) )^{T} \in \mR^{5000}$, where $K(\cdot, \cdot)$ is the RBF kernel and $\bx^{(i)}$ denotes the $i$th training sample. The response variable was generated by
\[
\by = \bw_2 \tanh(\bw_1 \bk + \bb_1) + \bb_2 + \bepsilon,
\]
where $\bw_2\in \mR^{1\times 5}$, $\bw_1 \in \mR^{5 \times 5000}$, $\bb_1 \in \mR^{5}$, $\bb_2 \in \mR$, and 
$\bepsilon \sim N(0,1)$. The components of $\bw_2$ and $\bb_2$ were randomly generated from $N(0,1)$, and $\bw_1$ and $\bb_1$ were the dual parameters of five SVR models with the above training samples as input and some vectors randomly generated from $N(0, I_5)$ as response. We set $C = 5$ and $\epsilon = 0.01$ for the SVR model. We also generated another 5000 samples from the same model as test data. Then we modeled the data by K-StoNet with one hidden layer, for which we tried the cases with 5 hidden units and 10 hidden units and set  $C=5$ and $\epsilon=0.01$ for each SVR. For comparison, we tried \textcolor{black}{a KNN with 5 hidden units,} a DNN with one hidden layer and 50 hidden units, and a DNN with 3 hidden layers and 50 hidden units on each layer. Figure \ref{knn_model} shows the training and testing paths of the five
models. For this example, K-StoNet achieved a training MSE about 1.0 and significantly outperformed the \textcolor{black}{DNN and KNN models} in prediction.

\begin{figure}[htbp]
\centering
\includegraphics[height=3.0in,width=1.0\textwidth]{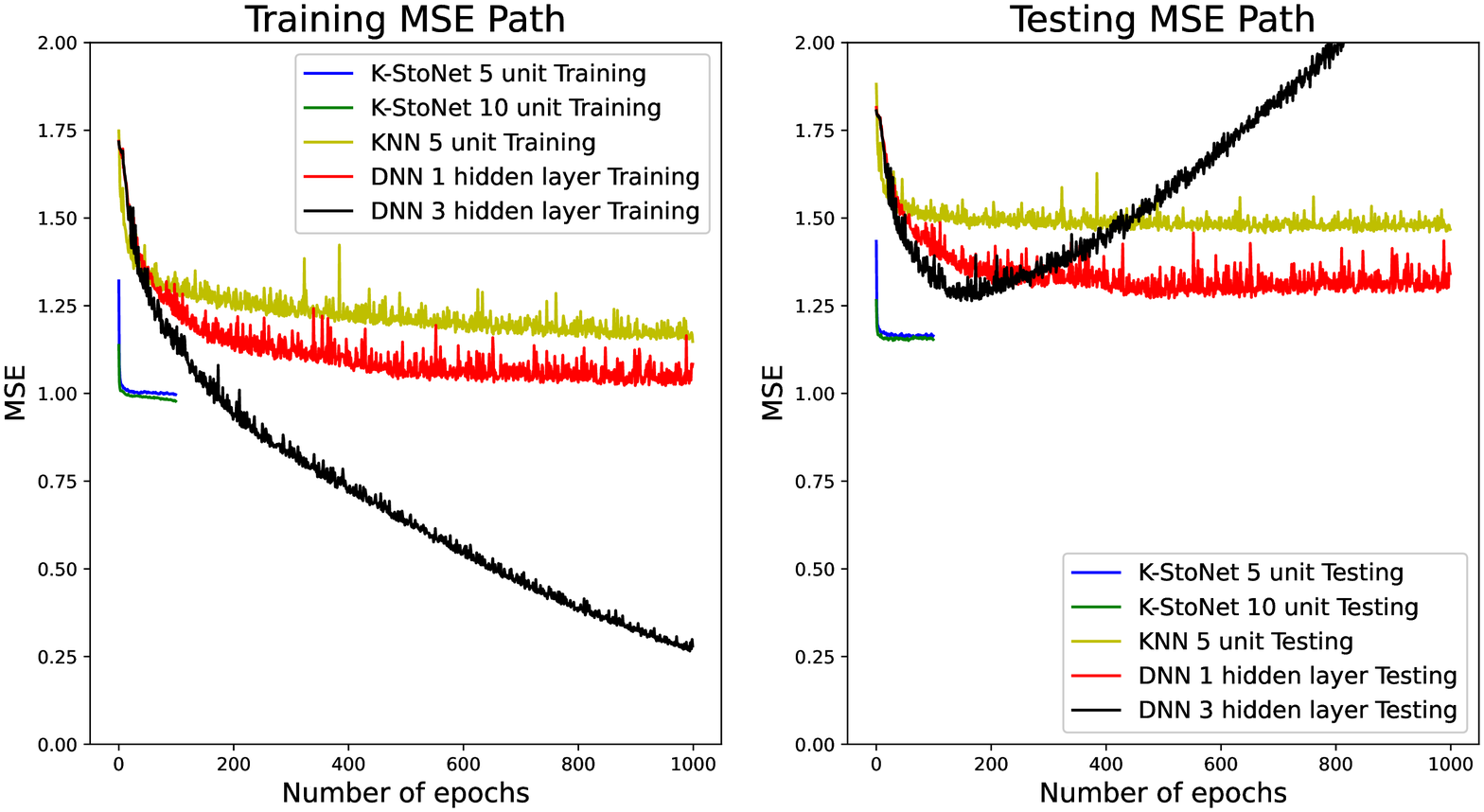}
\caption{MSE paths produced by two K-StoNets, one KNN, and two DNNs for the data generated from a KNN model: the left plot is for training and the right plot is for testing.}
\label{knn_model}
\end{figure}

 \subsection{A measurement error example}
 \label{SimulationII}
  
This example mimics the typical scenario under which DNN works. We generated 500 training samples and 500 test samples from a nonlinear regression: for each sample $(Y, \bX)$, where $Y\in \mR$ and $\bX=(X_1,\ldots,X_5) \in \mR^5$. The explanatory variables $X_1, \dots, X_5$ were generated such that each follows the standard Gaussian distribution, 
while they are mutually correlated with a correlation coefficient of 0.5.
The response variable was generated from the nonlinear regression
\[
\scriptsize
Y = \frac{5 X_2}{1+X_1^2} + 5\sin(X_3 X_4) + 2 X_5 + \epsilon,
\]
where $\epsilon \sim N(0,1)$. Then each explanatory variable was perturbed by adding a random measurement error independently drawn from $N(0,0.5)$. 

We modeled the data using two different K-StoNets, one with 1-hidden layer and 5 hidden units, and the other with 3-hidden layers and 20 hidden units on each hidden layer. Both models were trained by IRO for 1000 epochs.
For comparison, \textcolor{black}{we also modeled the data by KNNs and DNNs with the same structures as the K-StoNets.}
The KNNs and DNNs were trained by  SGD with momentum for 1000 epochs with a minibatch size of 100, a constant learning rate of 0.005, and a momentum decay factor of 0.9. 
As shown in Figure \ref{Measurement_Error}, the 1-hidden layer DNN and KNN perform stably in both training and testing, while the 3-hidden layer DNN and KNN are obviously over-fitted.
Compared to the DNN and KNN, K-StoNet is resistant to over-fitting, even when an overly large model is employed.

 \begin{figure}[htbp]
\begin{tabular}{c}
\includegraphics[height=3.5in,width=1.0\textwidth]{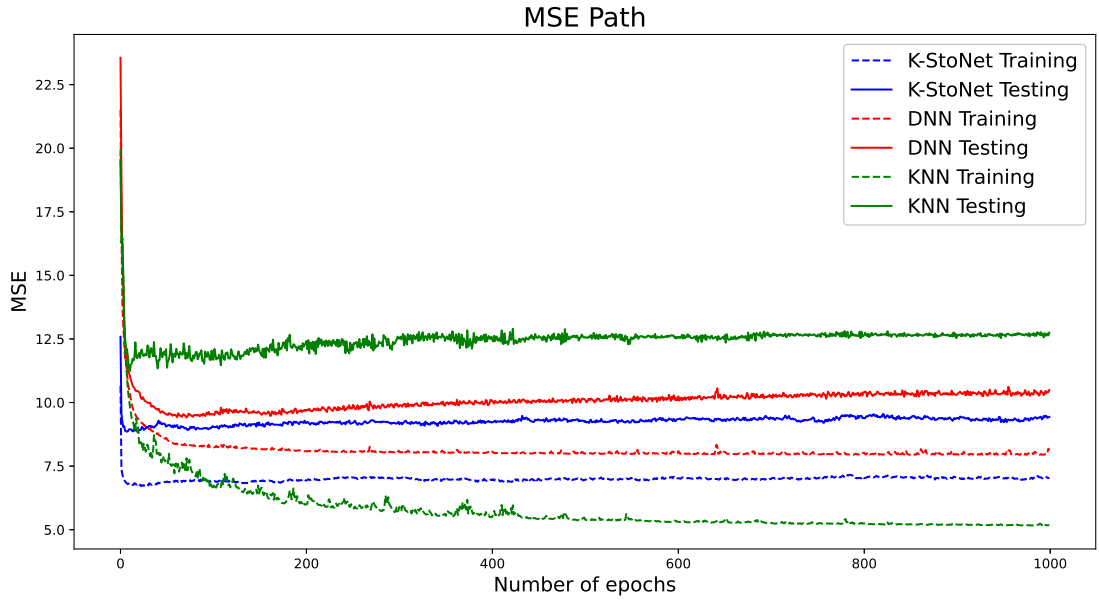} \\ 
\includegraphics[height=3.5in,width=1.0\textwidth]{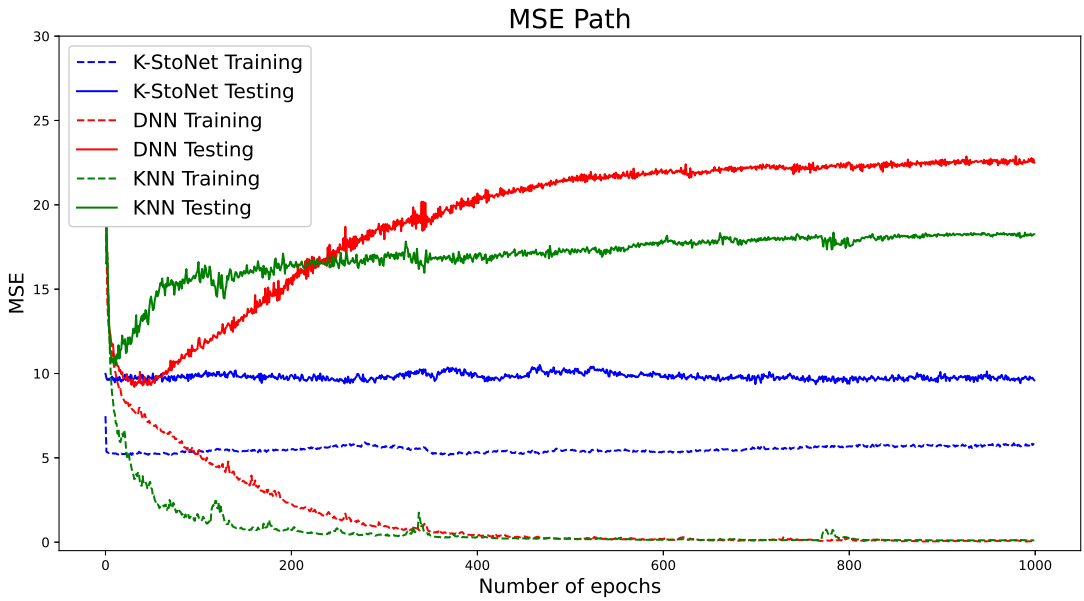}
\end{tabular}
\caption{MSE paths produced by K-StoNets and DNNs: (upper) one-hidden-layer networks; 
(lower) three-hidden-layer networks.}
\label{Measurement_Error}
\end{figure}

\textcolor{black}{
Finally, we explored the sparsity of the SVR layer by varying the value of $\varepsilon$ defined in (\ref{SVRdist}). 
Table \ref{SV_table} shows the number of support vectors selected by the two K-StoNets, together with their training and test errors, at different values of $\varepsilon$.
It implies that the sparsity of K-StoNet can be controlled by $\varepsilon$, a larger $\varepsilon$ 
leading to less support vectors.
However, the two K-StoNet models show different sensitivities to $\varepsilon$. 
The 3-hidden layer K-StoNet has a higher representation power and is more flexible; it can achieve relatively low training error with a large number of connections, and is more sensitive to $\epsilon$. When $\epsilon$ increases from 0.01 to 0.09, it changes from an overfitted model to an underfitted model. Correspondingly, the training MSE increases, while the test MSE decreases in the beginning and then starts to increase.
In contrast, for the 1-hidden layer K-StoNet, its representation power is limited, and it is less flexible and  thus less sensitive to $\epsilon$. It led to about the same models with different choices of $\epsilon$ (with similar training and test errors). The training and test errors varied slightly with $\epsilon$, as different sets of support vectors were used for different choices of $\epsilon$. In general, the set of support vectors used for a large  $\epsilon$ is not nested to that for a small $\epsilon$. Therefore, a smaller $\epsilon$ does not necessarily lead to a smaller training error. }

 \begin{table}[htbp]
\caption{\textcolor{black}{Performance of the K-StoNet model with different values of $\varepsilon$, where the model was evaluated at the last iteration, \#SV represents the average number of support vectors selected by the SVRs at the first hidden layer, and the number in the parentheses represents the standard deviation of the average. }
}
\label{SV_table}
\begin{center}
\begin{tabular}{cccccccc}
 \toprule
  & \multicolumn{3}{c}{1-Hidden Layer} & & 
  \multicolumn{3}{c}{3-Hidden Layer} \\ \cline{2-4} \cline{6-8}
 $\varepsilon$ & \#SV & Train\_MSE & Test\_MSE  & & \#SV & Train\_MSE & Test\_MSE \\ 
 \midrule
 0.01 & 473.6(9.22)  & 6.6786  & 9.0852 & & 409.0(6.419)  & 4.6552  & 10.1387   \\
 
  0.02 & 428.6(18.73)  & 6.7382  & 9.0387 & & 320.2(13.85)  & 4.7257  & 9.9422   \\

  0.03 & 416.2(16.59)  & 6.6917  & 9.0531 & & 235.8(7.94)  & 4.9850  & 9.8192 \\

0.04 & 381.0(29.75)  & 6.7301  & 9.0742  & & 165.4(8.96)  & 5.2578  & 9.3949 \\

 0.05 & 361.4(45.85)  & 6.5158  & 9.3358  & & 113.2(7.83)  & 5.6594  & 9.0195  \\

0.06 & 342.6(28.35)  & 6.5041  & 9.1498  & & 69.6(4.50)  & 6.1869  & 8.7551   \\

 0.07 & 347.2(13.39)  & 6.4878  & 9.1893  & & 37.8(3.60)  & 6.8884  & 9.0565  \\

0.08 & 313.6(38.52)  & 6.5083  & 9.0639 & & 17.6(3.61)  & 7.8616  & 9.5882  \\

 0.09 & 317.0(14.59)  & 6.4130  & 9.0330  & & 16.2(16.45)  & 9.1655  & 9.9940  \\

 0.1 & 290.2(40.45)  & 6.4676  & 9.0308 & &  8.2(13.47)  & 10.5946  & 10.7611   \\
 \bottomrule
\end{tabular}
\vspace{-0.25in}
\end{center}
\end{table}

\section{Real Data Examples} \label{numericalII}

This section shows that a small K-StoNet can work well for a variety of problems. The example in Section \ref{realex1} has a high dimension, which represents typical problems that support vector machine/regression works on. 
The examples in Sections \ref{realex2} and \ref{realex3} have large training sample sizes, which represent typical problems that the DNN works on. 
The examples in Section \ref{realex4} represent more real world problems, with which we explore the prediction performance of K-StoNet.

\subsection{QSAR Androgen Receptor} \label{realex1}

The QSAR androgen receptor dataset is available at the UCI machine learning repository, which consists of 1024 binary attributes (molecular fingerprints) used to classify 1687 chemicals into 2 classes (binder to androgen receptor/positive, non-binder to androgen receptor /negative), i.e. $n = 1607, p = 1024$. The experiment was done in a 5-fold cross-validation. 

In each fold of the experiment, we modeled the data by a K-StoNet with one hidden layer and 5 hidden units, and trained the model by IRO for 40 epochs. The prediction was computed by averaging over the models generated in the last 10 epochs. For comparison,  support vector machine (SVM), KNN and DNN were applied to this example. For SVM, we employed the RBF kernel with $C=1$. \textcolor{black}{For KNN, we used the same structure as K-StoNet.}
For DNN, we tried two network structures, 1024-5-1 and 1024-10-5-1, which
 are called DNN\_one\_layer and DNN\_two\_layer, respectively. 
Each of the KNN and DNN models were
trained by SGD for 1000 epochs with a mini-batch size of 32 and a constant learning rate of 0.001. The weights of the DNNs were subject to the LASSO penalty with the regularization parameter $\lambda=1e-4$.

\begin{figure}[htbp]
\centering
\includegraphics[height=3.0in,width=0.45\textwidth]{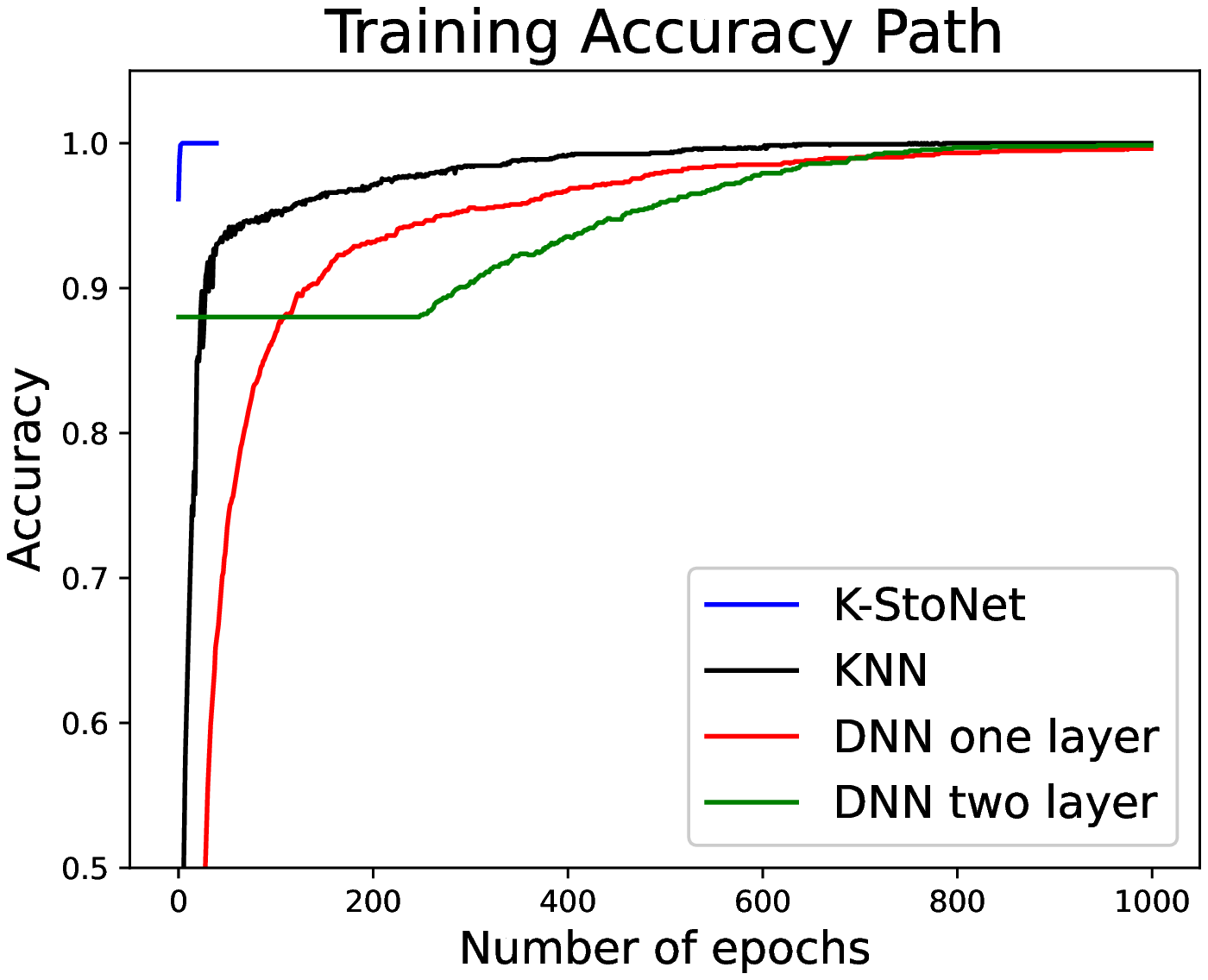}
\includegraphics[height=3.0in,width=0.45\textwidth]{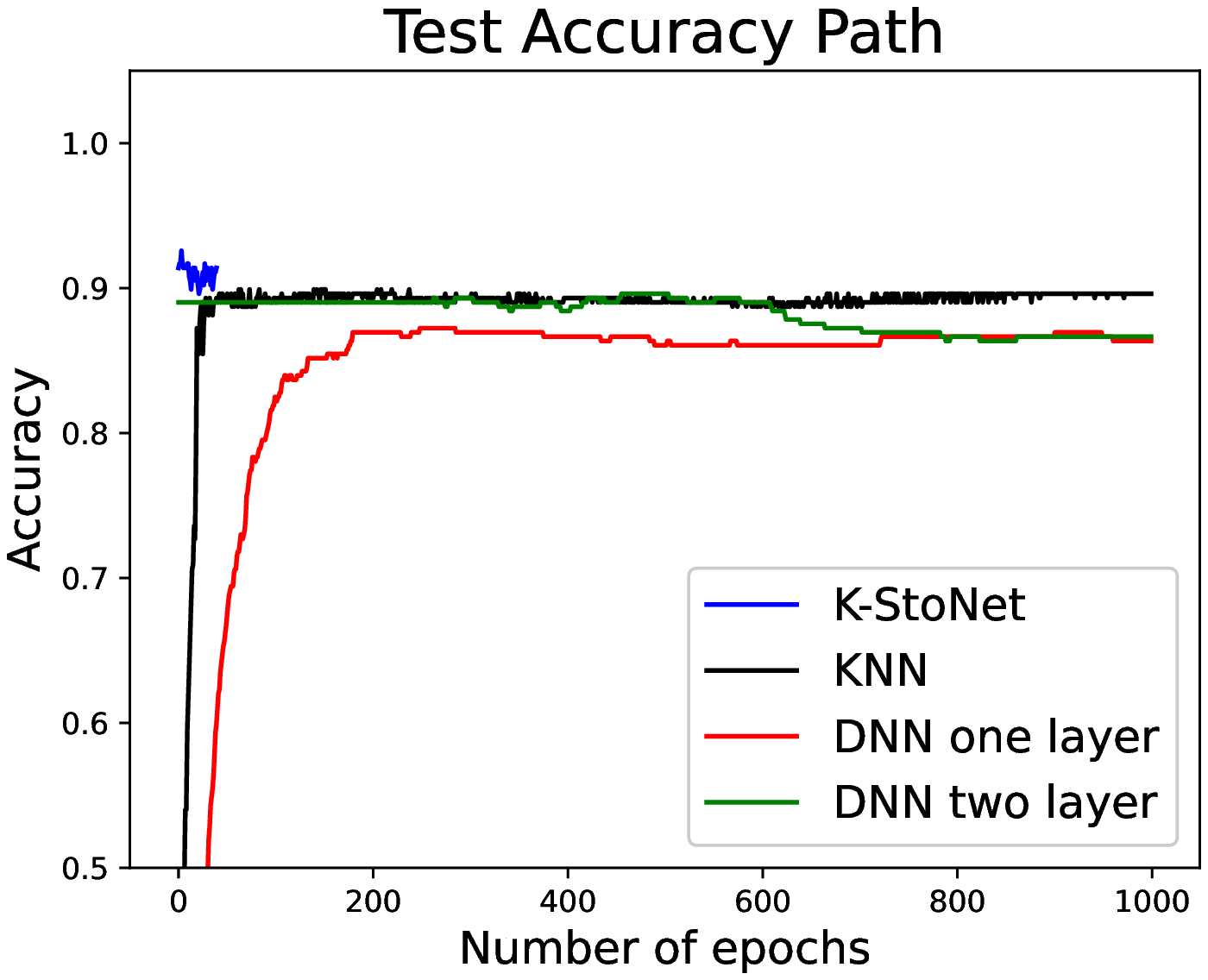}
\caption{Training and prediction accuracy paths (along with epochs) produced by K-StoNet, KNN and DNN in one fold of the cross-validation experiment for the QSAR androgen receptor data. }
\label{QSAR}
\end{figure}
 
Figure \ref{QSAR} compares the training and prediction accuracy paths produced by K-StoNet, KNN and DNN in one fold of the experiment.
Table \ref{QSAR_result} summarizes the training and prediction accuracy produced by K-StoNet, SVM, KNN and DNN over the five folds. In summary,
 K-StoNet converges very fast to the global optimum with the training accuracy close to 1, and is less bothered  by the over-fitting issue.  
In contrast, the KNN and DNN models took more epochs to converge and predicted less accurately than K-StoNet. SVM is inferior to K-StoNet in both training and prediction.

\textcolor{black}{Since each iteration of the IRO algorithm involves imputing latent variables and solving a series of SVR/linear regressions, it is more expensive than a single gradient update step used in DNN training. To compare their computational efficiency, we include an accuracy versus time plot in Figure \ref{qsar_time}, which indicates that 
K-StoNet took less computational time than KNN and 
DNN to achieve the same training/prediction accuracy. }

\begin{figure}
\centering
\includegraphics[height=3.0in,width=0.45\textwidth]{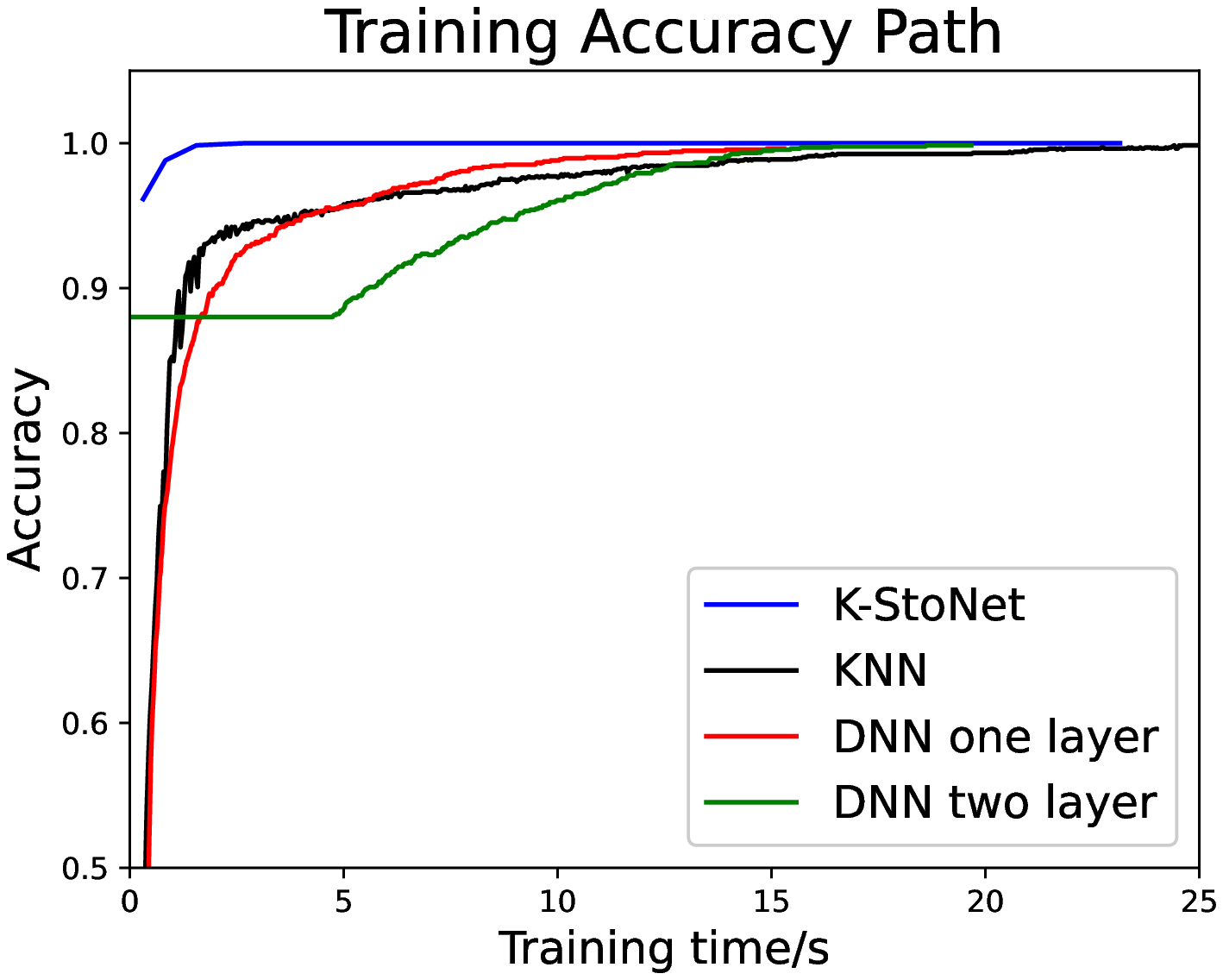}
\includegraphics[height=3.0in,width=0.45\textwidth]{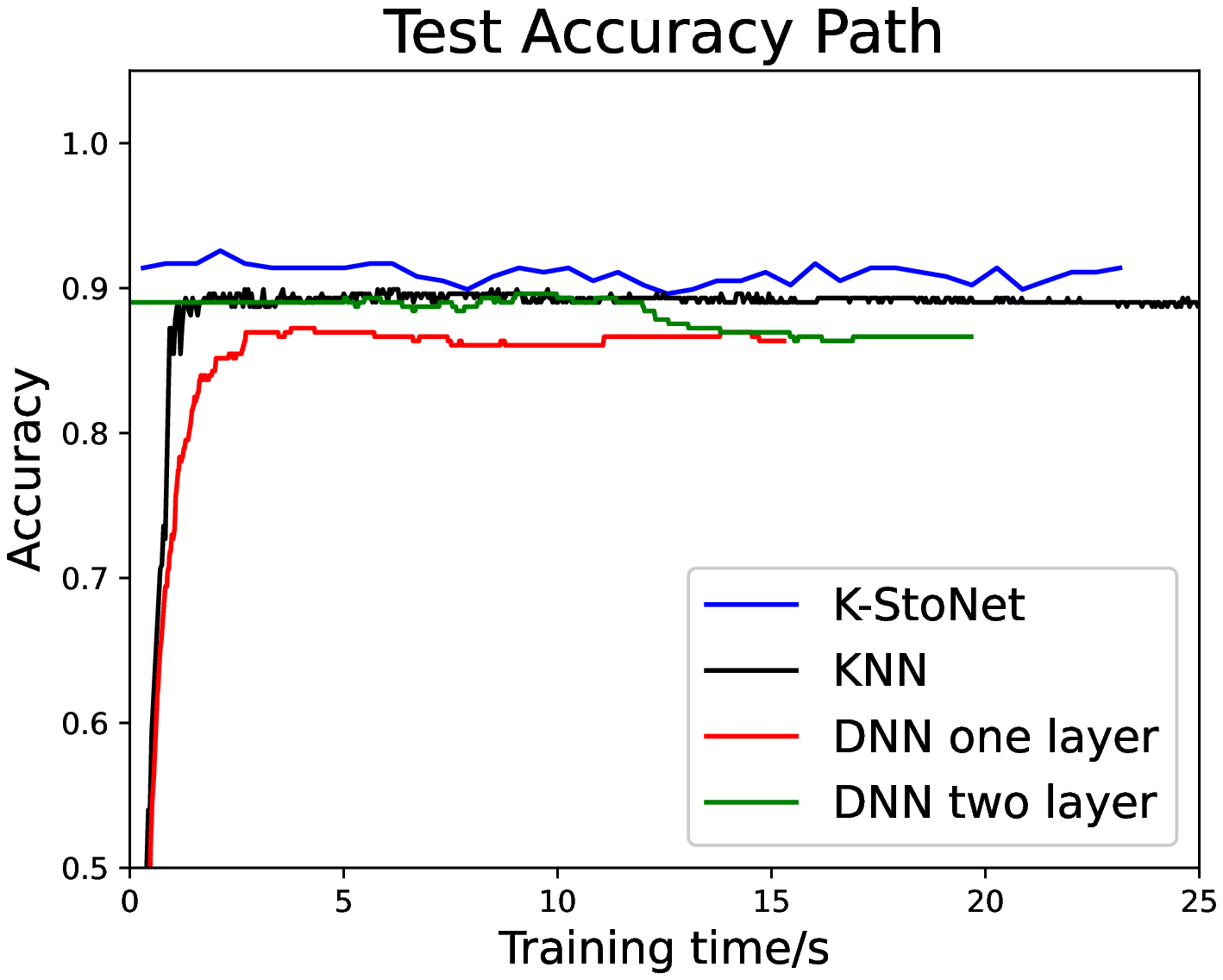}
\caption{Training and prediction accuracy paths (along with computational time) produced by K-StoNets, KNN and DNNs in one fold of the cross-validation experiment for the QSAR androgen receptor data.}
\label{qsar_time}
\end{figure}

\begin{table}[htbp]
\caption{Training and prediction accuracy(\%) for QSAR androgen receptor data, where ``T'' and ``P'' denote the training and prediction accuracy, respectively. }
\label{QSAR_result}
\begin{center}
\vspace{-0.15in}
\adjustbox{max width=1.0\linewidth}{
\begin{tabular}{cccccccc}
 \toprule
Method  & & Split 1  & Split 2 & Split 3 & Split 4 & Split 5  &  Average \\
  \midrule
         &T&   99.93 & 99.85 & 99.85 & 100 & 100 &       99.926 \\   
\raisebox{1.5ex}{K-StoNet} &P&   88.43 & 93.47 & 90.50 & 90.50 & 91.99 & 90.978  \\ \hdashline
    &T& 97.11 & 96.89 & 97.11 & 97.11 & 97.19 & 97.082 \\
\raisebox{1.5ex}{SVM} &P& 89.32 & 90.80 & 87.83 & 89.02 & 92.28 & 89.850 \\ \hdashline
&T& 99.93 & 100 & 98.44 & 99.93 & 99.93 & 99.646 \\
\raisebox{1.5ex}{KNN} &P& 88.72 & 93.17 & 89.32 & 90.50 & 91.10 & 90.562 \\ \hdashline
&T& 99.70 & 99.63 & 99.85 & 99.78 & 99.85 & 99.762 \\
\raisebox{1.5ex}{DNN\_one\_layer}  &P& 85.16 & 88.72 & 89.32 & 86.35 & 88.13  & 87.536 \\ \hdashline
 &T& 99.93 & 99.93 & 99.93 & 100 & 100 & 99.958 \\
\raisebox{1.5ex}{DNN\_two\_layer} &P& 86.94 & 88.13 & 88.13 & 86.05 & 88.72 & 87.596 \\ 
 \bottomrule
\end{tabular}}
\end{center}
\vspace{-0.2in}
\end{table}

\subsection{MNIST Data} \label{realex2}

 The MNIST \citep{lecun1998gradient} is a benchmark dataset in machine learning. It
consists of 60,000 images for training and 10,000 images for testing. We modeled the data by a K-StoNet with  one hidden layer, 20 hidden units, and the softplus activation function, We trained the model by IRO for 6 epochs.
For comparison, we trained a standard LeNet-300-100 model \cite{lecun1998gradient} by Adam \cite{kingma2014adam} with default parameters for 300 epochs, a constant learning rate of 0.001, and a mini-batch size of 128. Figure \ref{mnist_path} shows the training paths of two models. Both models can achieve 100\% training accuracy. LeNet-300-100 achieved 98.38\% test accuracy, while K-StoNet achieved 98.87\% test accuracy (at the 3rd iteration) without data augmentation being used in training!

\begin{figure}[htbp]
 \centering
  \includegraphics[height=3.0in,width=1.0\textwidth]{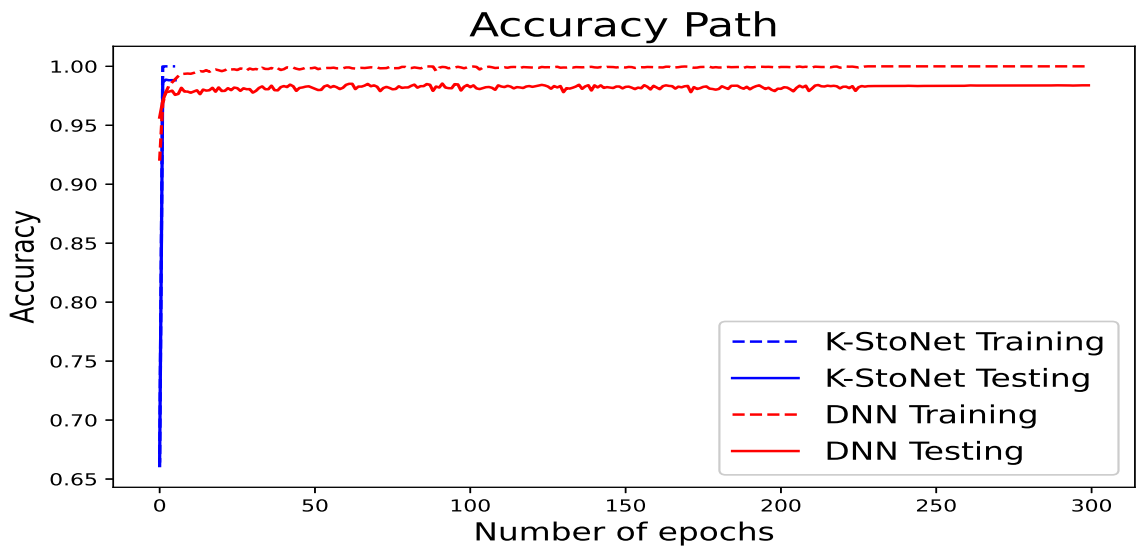}
 \caption{Training and test accuracy versus epochs produced by K-StoNet and DNN (LeNet-300-100) for the  MNIST data, where K-StoNet achieved a prediction accuracy of 98.87\%, and LeNet-300-100 achieved a prediction accuracy of 98.38\%.}
 \label{mnist_path}
\end{figure}

\subsection{CoverType Data} \label{realex3}

\textcolor{black}{The CoverType data is available at the UCI machine learning repository. It consisted of $n=581,012$ samples with $p=54$ attributes, which were collected for classification of forest cover types from cartographic variables. This dataset has an extremely large sample size, which represents a typical problem that the DNN works on. We used half of the samples for training and the other half for testing. The experiments were repeated for thee times.} 

\textcolor{black}{We modeled the data by a K-StoNet with one hidden layer and 50 hidden units, and trained the model by the IRO algorithm for 2 epochs. For comparison, we also modeled the data by a 2-hidden-layer DNN with 1000 nodes on the first hidden layer and 50 nodes on the second hidden layer. We trained the DNN model by SGD with momentum, where  a mini-batch size of 500, a constant learning rate of 0.01 and a momentum decay factor of 0.9 were used. The numerical results were summarized in Table \ref{CoverType_result_update}, which indicates that  K-StoNet outperforms DNN in both training and prediction for this example. }

\begin{table}[htbp]
\caption{Training and prediction accuracy(\%) for CoverType, where ``T'' and ``P'' denote the training and prediction.  }
\label{CoverType_result_update}
\vspace{-0.15in}
\begin{center}
\adjustbox{max width=1.0\linewidth}{
\begin{tabular}{cccccccc}
 \toprule
Method  & & Run 1  & Run 2 & Run 3  &  Average \\
  \midrule
         &T& 99.22 & 99.32   & 99.33  &   99.29    \\   
\raisebox{1.5ex}{K-StoNet} &P& 94.21 &94.23  & 94.26  &  94.23 \\ \hdashline
&T& 98.20 & 98.11 & 98.07 & 98.13   \\
\raisebox{1.5ex}{DNN}  &P& 94.11 & 94.06 & 94.04 &   94.07  \\ 
 \bottomrule
\end{tabular}}
\end{center}
\vspace{-0.2in}
\end{table}

\subsection{More UCI Datasets} \label{realex4}

As shown by the above examples, K-StoNet can converge in only a few epochs and is less bothered by overfitting, and its prediction accuracy is typically similar or better than the best one that DNN achieved. In order to achieve the best prediction accuracy, DNN often needs to be trained with tricks such as early stopping or Dropout \cite{srivastava2014dropout}, which lack the theoretical guarantee for the down-stream statistical inference. In contrast, K-StoNet possesses the theoretical guarantee to asymptotically converge to the global optimum and enables the prediction uncertainty easily assessed (see Section \ref{uncertainsection}). This subsection compares 
K-StoNet with Dropout in prediction on more real world examples, 10 datasets taken at the UCI machine learning repository.  

\textcolor{black}{ 
 Following the setting of \cite{MCdropout2016}, we randomly split each dataset into training and test sets with 90\% and 10\% of its samples, respectively. The random split was repeated for 20 times. The average prediction accuracy and its standard deviation were reported.  As in \cite{MCdropout2016}, for the largest two datasets {\it Protein Structure} and {\it Year Prediction MSD}, the random splitting was done five times and one time, respectively. The baseline results were taken from \cite{MCdropout2016} and  \cite{hernandez2015probabilistic}. The neural network model used there had one hidden layer and 50 hidden units for all datasets except for the largest two. For the largest two datasets, the neural network model had one hidden layer and 100 hidden units. The K-StoNet model we used had one hidden layer with 5 hidden units and the softplus activation function for all datasets except for the largest one. For the largest dataset, K-StoNet had one hidden layer with 50 hidden units. Other parameter settings were given in the Appendix. The results were summarized in Table \ref{uci_result}, which indicates that K-StoNet 
generally outperforms Dropout in prediction. }

\textcolor{black}{For a thorough comparison, the KNN has also been implemented for the UCI datasets except for two large ones, ``Protein Structure'' and ``Year Prediction MSD''. For these two datasets, 
the training sample size $n$ is too large, making the gram matrix hard to handle. For the same reason, it is not included in the comparisons for the MNIST and CoverType data examples, either.
For the KNN, the use of minibatch data is not very helpful when $n$ is large, as there are still $n$ kernels $(K(\bx^{(1)}, \bx^{*}), K(\bx^{(2)},\bx^{*}), \ldots, K(\bx^{(n)},\bx^{*}))$ we need to evaluate for each sample $\bx^{*}$ in the minibatch. 
For the other datasets, the KNN was run with the same network structure as for the K-StoNet. 
The detailed parameter settings were given in the Appendix. 
The results are summarized in Table \ref{uci_result}, which indicates that 
the KNN is generally inferior to K-StoNet in prediction. 
}

\begin{table}[htbp]
\caption{Average test RMSE (and its standard error) by variational inference (VI, \cite{graves2011practical}), probabilistic back-propagation (PBP, \cite{hernandez2015probabilistic}), dropout (Dropout, \cite{MCdropout2016}), SGD via back-propagation (BP),  and KNN, where $N$ denotes the dataset size and $p$ denotes the input dimension. \textcolor{black}{For each dataset, the boldfaced values are the best result or the second best result if it is insignificantly different from the best one according to a $t$-test with a significance level of 0.05.} }
\label{uci_result}
\vspace{-0.2in}
\adjustbox{max width=\textwidth}{
\begin{tabular}{@{}l@{\hspace{4mm}}r@{\hspace{3mm}}r@{\hspace{4mm}}r@{\hspace{2mm}}r@{\hspace{2mm}}r@{\hspace{4mm}}r@{\hspace{2mm}}r@{\hspace{2mm}}r@{}}
\multicolumn{3}{c}{} & \multicolumn{3}{c}{ } \\ \toprule
\textbf{Dataset} & $N$ & $p$ & 
\multicolumn{1}{c}{{VI}} & 
\multicolumn{1}{c}{{PBP}} & 
\multicolumn{1}{c}{{Dropout}} & 
\multicolumn{1}{c}{{BP}} & 
\multicolumn{1}{c}{{KNN}} & 
\multicolumn{1}{c}{{K-StoNet}}  \\ 
\midrule
Boston Housing & 506 & 13 &  
4.32 \tpm 0.29 &  {3.014 \tpm 0.1800} & \textbf{2.97 \tpm 0.19}  & {3.228 \tpm 0.1951} & 4.196 \tpm 0.069& \textbf{2.987 \tpm 0.0227}\\ 
Concrete Strength & 1,030 & 8 & 
7.19 \tpm 0.12 & 5.667 \tpm 0.0933 & \textbf{5.23 \tpm 0.12} & 5.977 \tpm 0.2207  & 6.962 \tpm 0.062 & \textbf{5.261 \tpm 0.0265}\\ 
Energy Efficiency & 768 & 8 & 
2.65 \tpm 0.08 & 1.804 \tpm 0.0481 & 1.66 \tpm 0.04 & \textbf{ 1.098 \tpm 0.0738} & 1.942 \tpm 0.030 & 1.301 \tpm 0.015\\ 
Kin8nm & 8,192 & 8 & 
0.10 \tpm 0.00 & 0.098 \tpm 0.0007 & 0.10 \tpm 0.00 & 0.091 \tpm 0.0015 & 0.0917 \tpm 0.0002 & \textbf{0.0747 \tpm 0.0003}\\ 
Naval Propulsion & 11,934 & 16 & 
0.01 \tpm 0.00 & 0.006 \tpm 0.0000 & 0.01 \tpm 0.00  & \textbf{0.001 \tpm 0.0001} & 0.0151 \tpm 0.0001 & \textbf{0.00098 \tpm 0.0001}\\ 
Power Plant & 9,568 & 4 & 
4.33 \tpm 0.04 & 4.124 \tpm 0.0345 & \textbf{4.02 \tpm 0.04} & 4.182 \tpm 0.0402 & 4.033 \tpm 0.010  & \textbf{3.952 \tpm 0.003}\\ 
Protein Structure & 45,730 & 9 & 
4.84 \tpm 0.03 & 4.732 \tpm 0.0130 & 4.36 \tpm 0.01 & 4.539 \tpm 0.0288 & na & \textbf{ 3.856 \tpm 0.005} \\ 
Wine Quality Red & 1,599 & 11 & 
0.65 \tpm 0.01 & {0.635 \tpm 0.0079} & \textbf{0.62 \tpm 0.01} & {0.645 \tpm 0.0098} & 0.675 \tpm 0.004 & \textbf{0.6214 \tpm 0.0008} \\ 
Yacht Hydrodynamics & 308 & 6 & 
6.89 \tpm 0.67 & \textbf{1.015 \tpm 0.0542} & 1.11 \tpm 0.09 & 1.182 \tpm 0.1645 & 7.5334 \tpm 0.0893  & \textbf{0.8560\tpm 0.0795}\\ 
Year Prediction MSD & 515,345 & 90 & 
 9.034 \tpm na & 8.879 \tpm na & \textbf{8.849 \tpm na}  & 8.932 \tpm na & na & 8.881 \tpm na\\ 
\bottomrule
\end{tabular} 
}
\end{table}

\section{Prediction Uncertainty Quantification with K-StoNet} \label{uncertainsection}

\subsection{A Recursive Formula for Uncertainty Quantification}

  The prediction uncertainty of the K-StoNet can be easily assessed with the variance decomposition formula (as known as Eve's law) based on the asymptotic normality theory. More precisely, we can first calculate the variance for the output of the first hidden layer based on the existing theory of SVR \citep{GaoBayesSVM2002},
  then calculate the variance for the output of the second hidden layer based on Eve's law and the theory of linear models, and  
 continue this process till the output layer is reached. For the case that normal regression was done at layer $h+1$ and no penalties was used in solving the optimization (\ref{optsolver2}), the calculation is detailed as follows.
  
 Let $\BY_i^{(t)} \in \mathbb{R}^{n\times m_i}$, $i=1,2,\ldots,h$, denote the matrices of latent variables imputed 
 at iteration $t$, which leads to the updated parameter estimate $\btheta_n^{(t)}$. Let $\bz$ denote a test sample. For each layer $i\in\{1,2,\ldots,h+1\}$, let $\bZ_i^{(t)} \in \mathbb{R}^{m_i}$ denote the output of the KNN (with the parameter $\btheta_n^{(t)}$) at layer $i$;  and let $\bmu_i^{(t)}$ and $\Sigma_i^{(t)}$ denote the mean and covariance matrix of $\bZ_i^{(t)}$, respectively. 
  Assume that $\bZ_i^{(t)}$'s are all multivariate Gaussian, which will be justified below.
  Then, for any layer $i \in \{2,\ldots,h+1\}$, by Eve's law,  we have
  \[
  \small
  \begin{split}
  \Sigma_{i}^{(t)} & =\mbE (\var(\bZ_{i}^{(t)}|\bZ_{i-1}^{(t)})) + \var(\mbE(\bZ_{i}^{(t)}|\bZ_{i-1}^{(t)})) \\
  &= \mbE\big\{ (\psi(\bZ_{i-1}^{(t)}))^T [(\psi(\BY_{i-1}^{(t)}))^T \psi(\BY_{i-1}^{(t)})]^{-1} \psi(\bZ_{i-1}^{(t)})\big\} 
   \diag\{\sigma_{i,1}^{2(t)},\ldots,\sigma_{i,m_i}^{2(t)} \}+ 
   \var(\tilde{\bw}_{i-1}^{*} \psi(\bZ_{i-1}^{(t)})) \\
   & = \big\{ tr([(\psi(\BY_{i-1}^{(t))})^T \psi(\BY_{i-1}^{(t)})]^{-1} \var(\psi(\bZ_{i-1}^{(t)}))) 
    + (\mbE(\psi(\bZ_{i-1}^{(t)})))^T 
  [(\psi(\BY_{i-1}^{(t)}))^T \psi(\BY_{i-1}^{(t)})]^{-1} 
  (\mbE(\psi(\bZ_{i-1}^{(t)})))
  \big\} \\
  & \quad \times diag\{\sigma_{i,1}^{2(t)},\ldots,\sigma_{i,m_i}^{2(t)}\} + \tilde{\bw}_{i-1}^* \var(\psi(\bZ_{i-1}^{(t)})) (\tilde{\bw}_{i-1}^*)^T,
   \end{split}
   \]
   where $\tilde{\bw}_{i-1}^*=\mbE(\tilde{\bw}_{i-1}^{(t)})$ and $\sigma_{i,j}^{2(t)}$'s are unknown.  
  Let $\bmu_{i-1}^{(t)}=(\mu_{i-1,1}^{(t)}, \ldots, 
  \mu_{i-1,m_{i-1}}^{(t)})^T$ and 
  $D_{\psi'}(\bmu_{i-1}^{(t)})= \diag\{\psi'(\mu_{i-1,1}^{(t)}), \ldots,\psi'(\mu_{i-1,m_{i-1}}^{(t)}) \}$.
  By the first order Taylor expansion, it is easy to derive that
  \[
  \small
  \begin{split}
  \mbE(\psi(\bZ_{i-1}^{(t)})) & \approx \psi(\bmu_{i-1}^{(t)}), \\
  \var(\psi(\bZ_{i-1}^{(t)})) & \approx 
  D_{\psi'}(\bmu_{i-1}^{(t)}) \Sigma_{i-1}^{(t)} 
  D_{\psi'}(\bmu_{i-1}^{(t)}).
  \end{split} 
  \]
 We suggest to estimate $\tilde{\bw}_{i-1}^*$ by $\tilde{\bw}_{i-1}^{(t)}$, estimate  $\bmu_{i-1}^{(t)}$ by $\bZ_{i-1}^{(t)}$, and estimate $\sigma_{i,j}^{2(t)}$ by its OLS estimator from the corresponding multiple regression. This leads to the following recursive formula for covariance estimation:
   \begin{equation} \label{varest}
   \small
   \begin{split}
   \widehat{\Sigma}_{i}^{(t)} & \approx
   \big\{ tr\big([(\psi(\BY_{i-1}^{(t)}))^T \psi(\BY_{i-1}^{(t)})]^{-1} 
   D_{\psi'}(\bZ_{i-1}^{(t)}) \widehat{\Sigma}_{i-1}^{(t)} 
  D_{\psi'}(\bZ_{i-1}^{(t)})  
   \big) \\
   & \quad + (\psi(\bZ_{i-1}^{(t)}))^T 
  [(\psi(\BY_{i-1}^{(t)}))^T \psi(\BY_{i-1}^{(t)})]^{-1} \psi(\bZ_{i-1}^{(t)}) \big\} \diag\{ \hat{\sigma}_{i,1}^{2(t)}, 
  \ldots, \hat{\sigma}_{i,m_{i}}^{2(t)}\} \\
   & \quad +  \tilde{\bw}_{i-1}^{(t)} 
    D_{\psi'}(\bZ_{i-1}^{(t)}) \widehat{\Sigma}_{i-1}^{(t)} 
  D_{\psi'}(\bZ_{i-1}^{(t)})
   (\tilde{\bw}_{i-1}^{(t)})^T,  
   \end{split}
   \end{equation}
for $i=2,3,\ldots,h+1$.   
 For the SVR layer, the asymptotic normality of 
 $\bZ_{1}^{(t)}$ can be justified based on 
 \cite{GaoBayesSVM2002}, which gives a Bayesian interpretation to the classical SVR with an $\varepsilon$-intensive loss function. 
  Let $f(\bx)=\bbeta^T \phi(\bx)+\bb$ denote a SVR function, and let $\{(\bx^{(1)},y^{(1)}),\ldots,(\bx^{(n)},y^{(n)})\}$ denote training samples. 
  In \cite{GaoBayesSVM2002}, the authors treated $(f(\bx^{(1)}),f(\bx^{(2)}),\ldots,f(\bx^{(n)}))$ as a random vector subject to a functional Gaussian process prior, and showed that the posterior of $f(\bz)$ can be approximated by a Gaussian distribution with mean $f^*(\bz)$ and variance 
  \begin{equation} \label{SVRvareq}
 \sigma_{f}^2(\bz)=K_{\bz,\bz}-K_{X_M,\bz}^T K_{X_M,X_M}^{-1} K_{X_M,\bz},
 \end{equation}
 where $f^*(\cdot)$ denotes the optimal regression function fitted by SVR, $X_M=\{\bx^{(s)}: |y^{(s)}-f^{(t)}(\bx^{(s)})|=\varepsilon\}$ denotes the set of marginal vectors, and $K_{A,B}$ denotes a  kernel matrix with elements formed by variables in  $A$ versus variables in $B$. By this result, conditioned on the training samples, 
 $\bZ_1^{(t)}$ is approximately Gaussian with the covariance matrix given by 
 $\diag\{\sigma_{f_1}^{2(t)}(\bz), \ldots, 
 \sigma_{f_{m_1}}^{2(t)}(\bz)\}$.

For the output and other hidden layers, we can restrict $m_i \prec \sqrt{n}$ for each layer  $i\in\{1,2,\ldots,h\}$.
Then, by the theory of \cite{Portnoy1988} which allows the dimension of the parameters diverging with the training sample size, $\bZ_{i+1}^{(t)}$ is asymptotically Gaussian.

Let ${Y}_{\bz}$ denote the unknown true observation at the test point $\bz$, and let
$\hat{\xi}^{(t)}(\bz)=Z_{h+1}^{(t)}$ denote its K-StoNet prediction with the parameters $\btheta_n^{(t)}$. Then the variance of ${Y}_{\bz}-\hat{\xi}(\bz)$ can be approximated by
\begin{equation} \label{finalvar}
\small
\widehat{\var}(Y_{\bz}-\hat{\xi}(\bz))=\frac{1}{n} \sum_{i=1}^n (y^{(i)}-\hat{\xi}^{(t)}(\bx^{(i)}))(y^{(i)}-\hat{\xi}^{(t)}(\bx^{(i)}))^T + \widehat{\Sigma}_{h+1}^{(t)},
\end{equation}
based on which the prediction interval for $Y_z$ can be constructed at a desired confidence level.
Further, by Theorem \ref{thm:3}, a more accurate confidence interval can be obtained by averaging over those obtained at different iterations.

 The above procedure can be easily extended to the probit/logistic regression (via Wald/endpoint transformation) and the case that an amenable penalty is used in solving (\ref{optsolver2}). Refer to \cite{Loh2017Ann} for asymptotic normality of the regularized estimators. 
 
\subsection{A Numerical Example} \label{uncertainty2num}

To illustrate the above procedure, we generated 100 training datasets and one test dataset as in Section \ref{SimulationII}. Each dataset consisted of 500 samples. Again, we modeled the data using a K-StoNet with one hidden layer and 5 hidden units.
For each training dataset, we trained the model by IRO  for 50 epochs. 
 For each test point $\bz$, a 95\% prediction interval was constructed based on each training dataset according to the prediction variance calculated in (\ref{finalvar}) at the last epoch of the run, and the coverage rate was calculated by averaging over the coverage status (0 or 1) of the 100 prediction intervals. Further, we averaged the coverage rates over 500 test points, which produced a mean coverage rate of 93.812\% with standard deviation 0.781\%. It is very close to the nominal level 95\%!
 Figure \ref{confidence_interval} shows the prediction intervals at 20 test points, which were obtained  
 at the last epoch of an IRO run for a training dataset.

\begin{figure}[htbp]
\centering
\includegraphics[height=3.0in,width=1.0\linewidth]{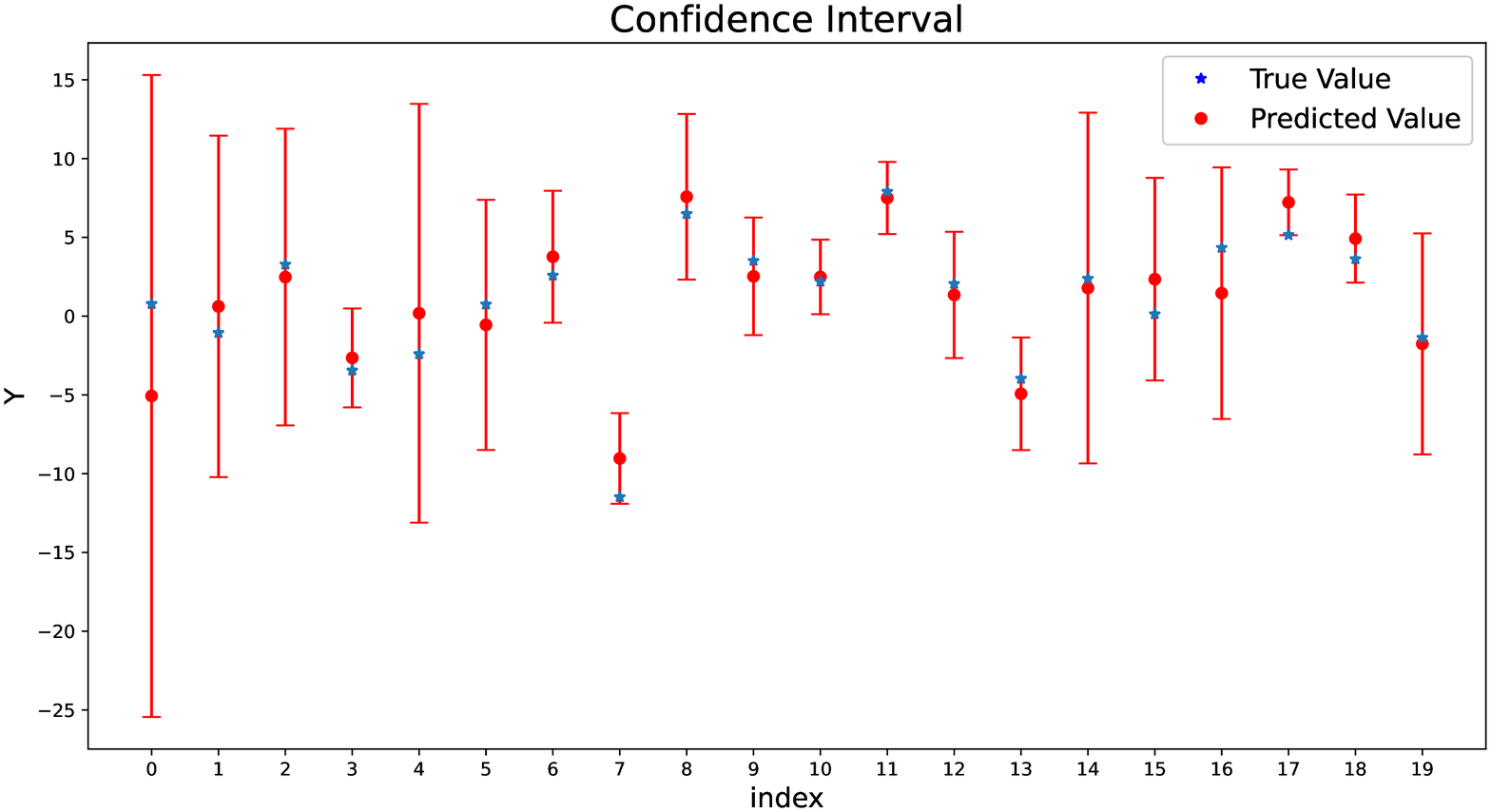}
\caption{95\% prediction intervals produced by K-StoNet for 20 test points, where the x-axis indexes the test points, 
 the y-axis represents the response value, and the blue star represents the true observation.}
\label{confidence_interval}
\end{figure}

For each test point, we have also constructed a prediction interval based on each training dataset 
by averaging those obtained at the last 25 epochs. 
As a result, the mean coverage rate over the 500 test points was improved to 94.026\% with standard deviation 0.771\%.

\section{Discussion} \label{discussionsection}

We have proposed K-StoNet as a new neural network model for machine learning. 
The K-StoNet incorporates SVR as the first hidden layer and reformulates the neural network as a latent variable model. The former maps the input variable into an infinite dimensional feature space via the RBF kernel, ensuring  absence of local minima on the loss surface of the resulting neural network. The latter breaks the high-dimensional nonconvex neural network training problem into a series of lower-dimensional convex optimization problems.   In addition, the use of kernel partially addresses the over-parameterization issue suffered by the DNN; it enables  a smaller network to be used, while ensuring the universal approximation capability.  The K-StoNet can be easily trained using the IRO algorithm. Compared to DNN, K-StoNet avoids local traps, and enables the prediction uncertainty easily assessed. Compared to SVR, K-StoNet has better approximation capability due to the added hidden units. Our numerical results indicate its superiority over SVR and DNN in both training and prediction.

 As an important ingredient of K-StoNet, StoNet is itself of interest. Under the framework of StoNet, the existing statistical theory for SVR and high-dimensional generalized linear models can be easily incorporated into the development of deep learning.
 
 \textcolor{black}{As another important gradient of K-StoNet, kernel has long been studied in machine learning as a function approximation tool. It is known that, with ``kernel trick'', kernel methods enable a classifier/regression to learn a complex decision boundary with only a small number of parameters.
To enhance their flexibility, some researchers proposed the so-called deep kernel learning methods, where one kernel function is repeatedly concatenated with another kernel or nonlinear function, see e.g. \cite{ChoSaul2009,zhuang2011,StroblV2013,mairal2014convolutional, Rebai2016,wilsonXing2016,Bohn2019}. Under some conditions, \cite{StroblV2013} showed that the upper bound of generalization error for deep multiple kernels can be significantly lower than that for the DNNs. However, unlike shallow kernel methods such as SVM and SVR \cite{VapnikL1963,VapnikC1964,Vapnik2000}, 
coefficient estimation for deep kernels is not convex any more. Estimating coefficients of the inner layer kernel can be highly nonlinear and becomes more complicated for a larger number of layers \cite{Bohn2019}. By introducing latent variables, this work provides an effective way to resolve the computational challenge suffered by deep kernel learning. K-StoNet is essentially a deep kernel learning method.}

The K-StoNet can be further extended in many different ways. \textcolor{black}{Instead of relying on a SVR solver, K-StoNet can be implemented as a StoNet with the gram matrix being treated 
as input data. In this case, although a large-scale gram matrix needs to be handled when the training sample size is large,  different kernels can be adopted for different tasks. For example, one might employ the convolutional kernel developed in  \cite{mairal2014convolutional} for computer vision problems. }
As discussed in Section \ref{IROsect}, the regression in the output and other hidden layers can also be regularized by different amenable penalties \cite{loh2017support}, some of which might lead to better selection properties than Lasso.

The K-StoNet has an embarrassingly parallel structure; \textcolor{black}{that is, solving K-StoNet can be broken into to many parallel tasks that can be solved with little or no need for communications. More precisely, the imputation step can be done in parallel for each observation; and the parameter updating step can be done in parallel for each of the regression tasks, including both SVR and Lasso regression.} If both steps are implemented in parallel, the computation can be greatly accelerated.  
 Currently, parallelization has not yet been completed: the imputation step was implemented in PyTorch, where the imputation for each observation was done in a serial manner;
 the parameter updating step was implemented using the package {\it sklearn}\cite{sklearn2011} for the normal or logistic regression 
 and {\it ThunderSVM} \cite{wenthundersvm18} for SVR, where the regression was solved one by one.
 On a machine with Intel(R) Xeon(R) Silver 4110 CPU @ 2.10GHz and Nvidia Tesla P100 GPU, for a dataset with $n=10000$ and $p=54$ and a 1-hidden layer K-StoNet with 5 hidden units, one iteration/epoch cost less than half a minute in the current serial implementation. Therefore, all the examples presented in this paper can be done reasonably fast on the machine. We expect that the computation can be much accelerated with a parallel implementation of K-StoNet.
 
 As mentioned in Section \ref{IROsect}, a scalable SVR solver will accelerate the computation of K-StoNet substantially.
 The scalable SVR can be developed in different ways. 
 For example,  \cite{scalableSVR2015} developed 
 ParitoSVR --- a parallel iterated optimizer for SVR,
 where each machine iteratively solves a small (sub-)problem based only on a subset data and these solutions are then combined to form the solution to the global problem. ParitoSVR is provably convergent to the results obtained from the centralized algorithm, where the optimization has access to the entire data set. Alternatively, one can implement SVR in an 
 incremental manner \cite{OnlineSVR2003,IncrementalSVM2006,XuBoundarySVR2010,GreedySVR2019}, where a SVR is first learned with a subset data and then sequentially updated based on the remaining set of samples. By the property that the decision function of SVR depends on support vectors only, \cite{XuBoundarySVR2010} proposed to use only the boundary vectors in the remaining set of samples.
 By the same property, \cite{GreedySVR2019} proposed 
 a sample selection method for SVR to 
 maximize its validation set accuracy at the minimum number of training examples.  As shown in \cite{XuBoundarySVR2010,GreedySVR2019}, both methods can accelerate the computation of SVR substantially.

\section*{Acknowledgment}
Liang's research is supported in part by the NSF grant DMS-2015498 and the NIH grants R01 GM-117597 and R01 GM-126089.
The authors thank the editor, associate editor, and two referees for their constructive comments which have led to significant improvement of this paper. 
The authors also thank Professor W.H. Wong for his helpful comments on an early version of this paper.

\appendix

\section{Appendix}

\subsection{Proof of Theorem \ref{thm:2}}

\begin{proof}
Since $\Theta$ is compact, it suffices to prove that the consistency holds for any $\btheta\in \Theta$.
For notational simplicity, we rewrite $\sigma_{n,i}$ by $\sigma_i$ in the remaining part of the proof. 

Let $\bYmis=(\bY_1,\bY_2,\ldots,\bY_h)$, where $\bY_i$'s are latent variables as defined in (\ref{Kstoneteq}). Let 
$\tilde{\bY}=(\tilde{\bY}_1,\ldots, \tilde{\bY}_h)$, 
where $\tilde{\bY}_i$'s are calculated by KNN in (\ref{KDNNeq}).
By Taylor expansion, we have 
\begin{equation} \label{Tayloreq}
\small
\begin{split}
\log & \pi(\bY,\bYmis|\bX,\btheta) = \log \pi(\bY, \tilde{\bY}|\bX,\btheta) 
 +\bepsilon^T 
\nabla_{\bYmis} \log \pi(\bY,\tilde{\bY}|\bX,\btheta) + 
O(\|\bepsilon\|^2),
\end{split} 
\end{equation}
where  $\bepsilon=\bYmis-\tilde{\bY}=(\bepsilon_1,\bepsilon_2,\ldots,\bepsilon_h)$, $\nabla_{\bYmis} \log\pi(\bY,\tilde{\bY}|\bX,\btheta)$ is evaluated according to the joint distribution (\ref{jointeq}), and  $\log \pi(\bY,\tilde{\bY}|\bX,\btheta)=\log \pi(\bY|\bX,\btheta)$ is the log-likelihood function of the KNN.

Consider the partial derivative $\nabla_{\bY_i} \log \pi(\bY,\bYmis|\bX,\btheta)$, for whose single component, say $Y_{i}^{(k)}$, the output of neuron $k$ at hidden layer $i\in \{2,3,\ldots,h\}$, we have 
\begin{equation} \label{gradeq}
\small
\begin{split}
\nabla_{Y_i^{(k)}} \log \pi(\bY,\bYmis|\bX,\btheta)
 & = \frac{1}{\sigma_{i+1}^2} \sum_{j=1}^{m_{i+1}} \big({Y}_{i+1}^{(j)}-b_{i+1}^{(j)}-\bw_{i+1}^{(j)} \psi({\bY}_i) \big) 
  w_{i+1}^{(j,k)}  \psi^{\prime}({Y}_{i}^{(k)}) \\
  & \quad - \frac{1}{\sigma_i^2}({Y}_{i}^{(k)}-b_{i}^{(k)}-\bw_i^{(k)} \psi({\bY}_{i-1})), \\
 \end{split}
\end{equation}
where $\bw_{i+1}^{(j)}$ denotes the vector of the weights from neuron $j$ at layer $i+1$ to the neurons at layer $i$, and $w_{i+1}^{(j,k)}$ denotes the weight from neuron $j$ at layer $i+1$ to neuron $k$ at  layer $i$.
For layer $i=1$, the second term of (\ref{gradeq}) will disappear, since the $\epsilon$-intensive loss is a constant around zero.

Since
$Y_{i+1}^{(j)}=b_{i+1}^{(j)}+\bw_{i+1}^{(j)} \psi(\bY_{i})+e_{i+1}^{(j)}$ and 
$\tilde{Y}_{i}^{(k)}=b_{i}^{(k)}+\bw_{i}^{(k)} \psi(\tilde{\bY}_{i-1})$, we have,
for any  $k \in \{1,2,\ldots,m_i\}$, 
 $\nabla_{Y_i^{(k)}} \log \pi(\bY,\tilde{\bY}|\bX,\btheta) =
\frac{1}{\sigma_{i+1}^2} \sum_{j=1}^{m_{i+1}} \big(e_{i+1}^{(j)}+\bw_{i+1}^{(j)} (\psi(\bY_i)- \psi(\tilde{\bY}_i) ) \big) w_{i+1}^{(j,k)}  \psi^{\prime}(\tilde{Y}_{i,k})$ if $i=h$, 
and 0 otherwise.
Then, by Assumption \ref{ass2}-(i)\&(iii),
for any $k \in \{1,2,\ldots,m_i\}$,
\begin{equation} \label{deeq1}
\begin{split}
 & | \nabla_{Y_i^{(k)}} \log \pi(\bY,\tilde{\bY}|\bX,\btheta) | \leq 
\begin{cases} 
\frac{1}{\sigma_{i+1}^2} \big\{ \sum_{j=1}^{m_{i+1}} e_{i+1}^{(j)} w_{i+1}^{(j,k)} \psi'(\tilde{Y}_{i}^{(k)}) + (c'r)^2  m_{i+1} \|\bepsilon_i\|\big\}, & \mbox{$i=h$} \\
0  & \mbox{$i<h$},
\end{cases}
\end{split}
\end{equation}
 where $\bepsilon_i=\bY_i-\tilde{\bY}_i$,  $e_{i+1}^{(j)}$ is the $j$th component of $\be_{i+1}$, $r$ is the upper bound of the weights, and $c'$ is the Lipschitz constant of $\psi(\cdot)$ as well as the upper bound of $\psi'(\cdot)$.

Next, let's figure out the order of $\|\bepsilon_i\|$. The $k$th component of $\bepsilon_i$ is given by  
\begin{equation*} 
\small
Y_{i}^{(k)}-\tilde{Y}_{i}^{(k)}=
\begin{cases}
e_{i}^{(k)}+\bw_i^{(k)} (\psi(\bY_{i-1})-\psi(\tilde{\bY}_{i-1}) ), & \mbox{$i>1$}, \\
e_i^{(k)}, & \mbox{$i=1$}.
\end{cases}
\end{equation*}
Therefore, $\|\bepsilon_1\|=\|\be_1\|$; and for $i=2,3,\ldots,h$, the following inequalities hold:
\begin{equation} \label{deeq3}
\small
\|\bepsilon_i\|  \leq \|\be_i \| + c' r m_i \|\bepsilon_{i-1}\|,  \quad
\|\bepsilon_i\|^2  \leq  2\|\be_i\|^2
+2 (c'r)^2 m_i^2 \|\bepsilon_{i-1}\|^2. 
\end{equation}

Since $\be_i$ and $\be_{i-1}$ are independent, by 
summarizing (\ref{deeq1}) and (\ref{deeq3}), we have 
\begin{equation} \label{coneq111}
\small
\begin{split}
 \int & \bepsilon^T \nabla_{\bYmis} \log \pi(\bY,\tilde{\bY}|\bX,\btheta)   \pi(\bYmis|\bX,\btheta,\bY) d \bYmis 
  \leq
 O\big( \sum_{k=2}^{h+1} \frac{\sigma_{k-1}^2}{\sigma_{h+1}^2} m_{h+1} (\prod_{i=k}^h m_i^2) m_{k-1} \big)=o(1), 
\end{split}
\end{equation}
where the last equality follows from \ref{ass2}-(v). 
Then, by (\ref{Tayloreq}), we have the mean value 
 \[
 \small
 \mbE\left[\log \pi(\bY,\bYmis|\bX,\btheta) - \log \pi(\bY|\bX,\btheta) \right] \to 0, \quad \forall \btheta \in \Theta.
 \]
 Further, it is easy to verify 
 \[
 \int |\bepsilon^T \nabla_{\bYmis} \log \pi(\bY,\tilde{\bY}|\bX,\btheta) |^2  \pi(\bYmis|\bX,\btheta,\bY)  d \bYmis <\infty,
 \]
 which, together with (\ref{Tayloreq}) and (\ref{deeq3}), implies 
 \begin{equation} \label{wllneq}
 \mbE |\log\pi(\bY,\bYmis|\bX,\btheta)-\log \pi(\bY|\bX,\btheta)|^2 <\infty.
 \end{equation}
Therefore, the weak law of large numbers (WLLN) applies, and the proof can be concluded. 
\end{proof}

\subsection{Proof of Lemma \ref{lem2}}

Lemma \ref{lem2} is a direct application of Lemma \ref{lem3} given below. 

 \begin{lemma}\label{lem3} Consider a 
 function $Q(\btheta, \bX_n)$. Suppose that the following 
  conditions are satisfied: (B1) 
  $Q(\btheta,\bX_n)$ is continuous in $\btheta$ and there exists a function 
  $Q^*(\btheta)$, which is continuous in $\btheta$ and 
  uniquely maximized at $\btheta^*$.
  (B2) For any $\epsilon>0$,  $\sup_{\btheta \in \Theta \setminus B(\epsilon)} 
  Q^*(\btheta)$ exists, where 
   $B(\epsilon)=\{\btheta: \|\btheta-\btheta^*\| < \epsilon\}$; 
   Let $\delta=Q^*(\btheta^*)-
    \sup_{\btheta \in \Theta\setminus B(\epsilon)} Q^*(\btheta)$, then $\delta>0$.
  (B3) $\sup_{\btheta \in \Theta} 
       |Q(\btheta, \bX_n)-Q^*(\btheta)| \stackrel{p}{\to} 0$ as $n\to \infty$.
Let $\hat{\btheta}_n=\arg\max_{\btheta\in \Theta}  Q(\btheta, \bX_n)$. 
Then  $\|\hat{\btheta}_n-\btheta^*\|\stackrel{p}{\to} 0$.
\end{lemma}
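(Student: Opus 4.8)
The plan is to prove consistency by the classical argmax (extremum estimator) argument: the uniform convergence (B3) lets us transfer information about the random criterion $Q(\btheta,\bX_n)$ to its deterministic limit $Q^*(\btheta)$, and the separation condition (B2) then converts a near-maximal value of $Q^*$ into proximity to $\btheta^*$. Fix an arbitrary $\epsilon>0$; since $\epsilon$ is arbitrary it suffices to show $P(\|\hat{\btheta}_n-\btheta^*\|\geq \epsilon)\to 0$, i.e.\ to rule out, with probability tending to one, that $\hat{\btheta}_n$ lands in $\Theta\setminus B(\epsilon)$.

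First I would introduce the ``good event'' $A_n=\{\sup_{\btheta\in\Theta}|Q(\btheta,\bX_n)-Q^*(\btheta)|<\delta/2\}$, where $\delta>0$ is the separation gap supplied by (B2) for this $\epsilon$. By (B3) we have $P(A_n)\to 1$, so it is enough to show that the event $A_n$ forces $\hat{\btheta}_n\in B(\epsilon)$.

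The core step is a short chain of inequalities carried out on $A_n$. Applying the two-sided uniform bound at $\hat{\btheta}_n$, then the defining maximizing property $Q(\hat{\btheta}_n,\bX_n)\geq Q(\btheta^*,\bX_n)$, and then the uniform bound again at $\btheta^*$, I obtain $Q^*(\hat{\btheta}_n) > Q(\hat{\btheta}_n,\bX_n)-\delta/2 \geq Q(\btheta^*,\bX_n)-\delta/2 > Q^*(\btheta^*)-\delta$. On the other hand, if $\hat{\btheta}_n\notin B(\epsilon)$, then (B2) gives $Q^*(\hat{\btheta}_n)\leq \sup_{\btheta\in\Theta\setminus B(\epsilon)}Q^*(\btheta)=Q^*(\btheta^*)-\delta$, which directly contradicts the strict lower bound just derived. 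Hence on $A_n$ we must have $\hat{\btheta}_n\in B(\epsilon)$, so $P(\|\hat{\btheta}_n-\btheta^*\|<\epsilon)\geq P(A_n)\to 1$; as $\epsilon$ was arbitrary, this is precisely $\|\hat{\btheta}_n-\btheta^*\|\stackrel{p}{\to}0$.

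There is no deep obstacle here—the result is the standard M-estimator consistency theorem and the argument is largely bookkeeping. The one point deserving care is keeping the inequalities strict where needed so the contradiction is clean: the factor $\delta/2$ in the definition of $A_n$ is chosen precisely so that the two applications of the uniform bound together consume strictly less than the full gap $\delta$, leaving a strict inequality against the separation bound from (B2). Continuity of $Q(\cdot,\bX_n)$ and $Q^*(\cdot)$ together with compactness of $\Theta$ (Assumption A2-(i)) guarantees that $\hat{\btheta}_n$ and the relevant suprema are attained, so the argument is well posed; these facts are already granted by (B1)--(B2) and need only be invoked. Applying this lemma with $Q(\btheta,\bX_n)=\frac{1}{n}\sum_{i=1}^n\log\pi(\bY^{(i)},\bY^{(i)}_{\rm mis}|\bX^{(i)},\btheta)$, whose uniform convergence to $Q^*(\btheta)$ follows by combining Theorem \ref{thm:2} with (\ref{equeq2}), then yields Lemma \ref{lem2}.
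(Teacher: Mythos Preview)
Your proof is correct and follows essentially the same route as the paper: both use the classical argmax/M-estimator argument, defining a high-probability ``good event'' on which the uniform bound with threshold $\delta/2$ forces $\hat{\btheta}_n\in B(\epsilon)$ via the separation gap from (B2). The only cosmetic difference is that the paper splits the good event over $B(\epsilon)$ and its complement and compares values of $Q(\cdot,\bX_n)$ directly, whereas you use a single global event and phrase the contradiction in terms of $Q^*(\hat{\btheta}_n)$; the underlying chain of inequalities is identical.
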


 \begin{proof} 
 Consider two events (i) $\sup_{\btheta \in \Theta \setminus B(\epsilon)} 
  |Q(\btheta,\bX_n)-Q^*(\btheta)| < \delta/2$, and (ii) 
  $\sup_{\btheta \in B(\epsilon)}
  |Q(\btheta,\bX_n)-Q^*(\btheta)| < \delta/2$. 
 From event (i), we can deduce that for any 
 $\btheta \in \Theta\setminus B(\epsilon)$, 
 $Q(\btheta, \bX_n) < Q^*(\btheta)+\delta/2 \leq Q^*(\btheta^*) -\delta 
  +\delta/2 \leq Q^*(\btheta^*) -\delta/2$. 

 From event (ii), we can deduce that for any $\btheta \in B(\epsilon)$, 
  $Q(\btheta, \bX_n)> Q^*(\btheta) -\delta/2$ and thus
  $Q(\btheta^*, \bX_n)> Q^*(\btheta^*) -\delta/2$. 
  
 If both events hold simultaneously,  then we must have $\hat{\btheta}_n \in B(\epsilon)$  
  as $n \to \infty$.  
 By condition (B3), the probability that both events hold tends to 1.  Therefore,
$P(\mbox{$\hat{\btheta}_n \in B(\epsilon)$}) \to 1$,
which concludes the lemma. 
\end{proof} 

 \subsection{Parameter Settings for K-StoNet} \label{settingsection}
 
 In all computations of this paper except for the CoverType experiments, the RBF kernel
  $k(\bx, \bx') = \exp(-\gamma||\bx - \bx'||_2^2) $ is used, where $\gamma$ is set to the default value $\frac{1}{p \text{Var}(\bx)}$, $p$ is the dimension of $\bx$, and $\text{Var}(\bx)$ is the variance of $\bx$. We have the following default values for the parameters: one hidden layer, 5 hidden units, $C_n=\tilde{C}_{n,1}^{(t)}=10$ for all $t$, $\varepsilon=0.01$, 
 $t_{HMC}=25$, $\alpha=0.1$, $\sigma_{n,2}^2=0.01$, 
 and the learning rate $\epsilon_{t,i}=5e-4$ for all $t$ and $i$. The parameter settings may vary around the default values to achieve better performance for the K-StoNet model.

 \paragraph{Section \ref{SimulationI}} 
 {\it Simulated DNN Data.}
 Network: $C_n=1$ for the SVR layer, $\sigma_{n,2}^2 = 0.001$ for the output layer;
 HMC imputation: $t_{HMC}=25$, $\alpha = 0.1$, and $\epsilon_{t,i}=5e-7$ 
 for all $t$ and $i$;  parameter updating: for all $t$ and $i$, (i) SVR with $\tilde{C}_{n,1}^{(t)}=1$ and $\varepsilon=0.1$, (ii) linear regression with a Lasso penalty and the regularization parameter $\lambda_{n,i}^{(t)}=1e-4$. 
 
 {\it Simulated KNN Data.}   Network: $C_n=5$ for the SVR layer, $\sigma_{n,2}^2 = 0.001$ for the output layer;
 HMC imputation: $t_{HMC}=25$, $\alpha = 0.1$, and $\epsilon_{t,i}=5e-4$ 
 for all $t$ and $i$;  parameter updating: for all $t$ and $i$, (i) SVR with $\tilde{C}_{n,1}^{(t)}=5$ and $\varepsilon=0.01$, (ii) linear regression with a Lasso penalty and the regularization parameter $\lambda_{n,i}^{(t)}=1e-4$.

\paragraph{Section \ref{SimulationII}} {\it Measurement error data.}  For both the one-hidden layer and three-hidden layer K-StoNets, the parameters were set as follows:
Network: $C_n=1$ for the SVR layer, $\sigma_{n,i}^2 = 0.001$ for layers $i=2,\ldots,h$, and 
$\sigma_{n,h+1}^2=0.01$; HMC imputation: $t_{HMC}=25$, $\alpha = 1$, and $\epsilon_{t,i}=5e-5$ 
for all $t$ and $i$;
 parameter updating: for all $t$ and $i$, (i) SVR with $\tilde{C}_{n,1}^{(t)}=1$ and $\varepsilon \in \{0.01,0.02,\ldots,0.1\}$, (ii) linear regression with a Lasso penalty and the regularization parameter $\lambda_{n,i}^{(t)}=1e-4$.

\paragraph{Section \ref{realex1}} {\it QSAR Androgen Receptor.} Network: $C_n=1$ for the SVR layer, $\sigma_{n,2}^2 = 0.001$ for the output layer;
 HMC imputation: $t_{HMC}=25$, $\alpha = 0.1$, and $\epsilon_{t,i}=5e-5$ 
 for all $t$ and $i$; parameter updating: for all $t$ and $i$, (i) SVR with $\tilde{C}_{n,1}^{(t)}=1$ and $\varepsilon=0.1$,  (ii) logistic regression with a Lasso penalty and the regularization parameter $\lambda_{n,i}^{(t)}=1e-4$.

 \paragraph{Section \ref{realex2}} {\it MNIST Data.}
Network: $C_n=10$ for the SVR layer, 
 $\sigma_{n,2}^2 = 1e-9$ for the output layer;
 HMC imputation: $t_{HMC}=25$, $\alpha = 0.1$, and $\epsilon_{t,i}=5e-13$ 
 for all $t$ and $i$; parameter updating: for all $t$ and $i$, (i) SVR with  $\tilde{C}_{n,1}^{(t)}=10$ and $\varepsilon=0.0001$, (ii) multinomial logistic regression with a Lasso penalty and the regularization parameter $\lambda_{n,i}^{(t)}=1e-4$.

 \textcolor{black}{\paragraph{Section \ref{realex3}}{\it CoverType Data.}
Network: $C_n=10$ for the SVR layer, $\sigma_{n,2}^2 = 0.005 $ for the output layer;
 HMC imputation: $t_{HMC}=25$, $\alpha = 0.1$, and $\epsilon_{t,i}=5e-5$ 
 for all $t$ and $i$; parameter updating: for all $t$ and $i$, (i) SVR with $\tilde{C}_{n,1}^{(t)}=10$ and $\varepsilon=0.01$.  (ii) multinomial logistic regression with a Lasso penalty and the regularization parameter $\lambda_{n,i}^{(t)}=1e-4$. This dataset consists of 44 binary features. When applying the RBF kernel $k(\bx, \bx') = \exp(-\gamma||\bx - \bx'||_2^2) $, the default choice $\gamma = \frac{1}{p \text{Var}(\bx)}$ does not work well.  Different values of $\gamma$ were used for different SVRs in the K-StoNet model. Let $\gamma_i$ denote the $\gamma$-value used for the SVR corresponding to the $i$-th hidden unit. We set $\gamma_i = 0.5$ for $1\leq i < 30$, $\gamma_i = 1$ for $30\leq i < 40$, $\gamma_i = 2$ for $40\leq i < 45$, and $\gamma_i = 5$ for $45\leq i \leq 50$.}
 
 \textcolor{black}{\paragraph{ Section \ref{realex4}  } For all 10 datasets except for {\it Yacht Hydrodynamic} and {\it Year Prediction MSD}, 
 we set $\sigma_{n,2}^2 = 0.01$, $t_{HMC} = 25$, $\alpha = 0.1$, and $\epsilon_{t,i} = 5e-4$ 
 for all $t$ and $i$. For the SVRs in the first layer, we set $\epsilon = 0.01$. We used $\frac{1}{9}$ of the training data as the validation set and chose $\tilde{C}_{n,1}^{(t)} \in {1, 2, 5, 10, 20}$ with the smallest MSE on the validation set. For the dataset {\it Yacht Hydrodynamic}, we set $\sigma_{n,2}^2 = 0.0001$, $\alpha = 0.1$, $\epsilon_{t,i} = 5e-6$ and $\tilde{C}_{n,1}^{(t)}  = 200$. For the dataset {\it Year Prediction MSD}, we set $\sigma_{n,2}^2 = 0.02$, $\alpha = 0.1$, $\epsilon_{t,i} = 1e-3$ and $\tilde{C}_{n,1}^{(t)}  = 1$. Similar to the CoverType dataset, when some categorical features exist in the dataset, the default choice $\gamma = \frac{1}{p \text{Var}(\bx)}$ in the RBF kernel does not work very well. Among the 10 datasets, we set $\gamma = 3$ for {\it Yacht Hydrodynamic}, $\gamma = 1$ for {\it Protein Structure}, and employed the default setting for the others. } 
 
 \textcolor{black}{The KNN model was trained in a similar setting as used for the probabilistic back-propagation method in \cite{hernandez2015probabilistic}: we used a one-hidden layer model with 50 hidden units, and trained the model using SGD with a constant learning rate of 0.0001 and a momentum decay factor of 0.9. As in \cite{hernandez2015probabilistic}, we ran SGD for 40 epochs with a mini-batch size of 1.}

\paragraph{Section \ref{uncertainty2num}} {\it Prediction Interval.} 
Network: $C_n=10$ for the SVR layer, $\sigma_{n,2}^2 = 0.001$ for the output layer;
 HMC imputation: $t_{HMC}=25$, $\alpha = 0.1$, and $\epsilon_{t,i}=5e-6$ 
 for all $t$ and $i$; parameter updating: for all $t$ and $i$, (i) SVR with  $\tilde{C}_{n,1}^{(t)}=10$ and $\varepsilon=0.05$, (ii) linear regression, OLS estimation.

\bibliographystyle{plainnat}
\bibliography{reference}

\end{document}